\pdfoutput=1
\documentclass[11pt,a4paper]{article}
\setlength{\textwidth}{150mm}
\setlength{\textheight}{242mm}
\setlength{\oddsidemargin}{5mm}
\setlength{\evensidemargin}{5mm}
\setlength{\topmargin}{0mm}
\setlength{\headheight}{0mm}
\setlength{\headsep}{0mm}


\RequirePackage[round]{natbib}\bibliographystyle{abbrvnat}

\usepackage{booktabs}

\usepackage{times}
\usepackage{xcolor}
\usepackage{soul}
\usepackage[utf8]{inputenc}
\usepackage[small]{caption}



\usepackage{color}
\usepackage{amsfonts,amsthm,amsmath}
\usepackage{tikz}
\usetikzlibrary{arrows,shapes, calc, fit, positioning}
\usepackage{tkz-graph}
\usepackage[textsize=tiny]{todonotes}

\newcommand{\bbR}{\mathbb{R}}

\newcommand{\calP}{\mathcal{P}}

\newcommand{\I}{\mathcal{I}}
\newcommand{\calU}{\mathcal{U}}

\newcommand{\Pol}{\ifmmode\text{\textsf{P}}\else\textsf{P}\fi}
\newcommand{\NP}{\ifmmode\text{\textsf{NP}}\else\textsf{NP}\fi}
\newcommand{\NPc}{\ifmmode\text{\textsf{NP}-complete}\else\textsf{NP}-complete\fi}
\newcommand{\NPC}{\ifmmode\text{\textsf{NPC}}\else\textsf{NPC}\fi}

\theoremstyle{plain}
	  \newtheorem{theorem}{Theorem}[section]
	  \newtheorem{cor}[theorem]{Corollary}
	  \newtheorem{lemma}[theorem]{Lemma}
	  \newtheorem{prop}[theorem]{Proposition}

\theoremstyle{definition}

	  \newtheorem{example}[theorem]{Example}
	  
\theoremstyle{remark}
	  

\title{Chore division on a graph}
\author{Sylvain Bouveret$^a$, Katar\'ina Cechl\'arov\'a$^b$, Julien Lesca$^c$\\ \\
$^a$Univ. Grenoble Alpes, CNRS, Grenoble INP, LIG, Grenoble, France\\
$^b$P.J. \v{S}af\'arik University, Ko\v sice, Slovakia\\ 
$^c$Universit\'e Paris-Dauphine, PSL, CNRS, LAMSADE, Paris, France}
\date{November 28, 2018}
\begin{document}

\maketitle

\begin{abstract}
The paper considers fair allocation of indivisible nondisposable  items that generate disutility (chores). We assume that these items are  placed in the vertices of a graph and each agent's  share has to form a connected subgraph of this graph. Although a similar model has been investigated before for  goods, we show that the goods and chores settings are inherently different. In particular, it is impossible to derive the solution of the chores instance from the solution of its naturally associated fair division instance.
We consider three common fair division solution concepts, namely proportionality, envy-freeness and equitability, and two individual disutility aggregation functions: additive and maximum based. 
We show that deciding the existence of a fair allocation is hard even if the underlying graph is a path or a star. We also present some efficiently solvable special cases for these graph topologies. 

\end{abstract}

\section{Introduction}
Fair division of goods and resources is a practical problem in  many situations and a popular research topic in Economics, Mathematics and Computer science. 
Sometimes, however, the objects that people have to deal with are undesirable, or, instead of utility create some cost. 
Imagine that a cleaning service firm allocates to its teams  a set of offices, corridors, etc in a building. Each team has some idea of how much effort each room requires. The cost of the whole assignment for a team may be in the form of time the team will have to spend on the job, and this might depend on whether the work in all assigned  rooms can start simultaneously or whether they have to be treated one after another.  Moreover, for practical reasons, it is desirable that each team's assignment is a contiguous set of rooms. As another example consider a firm that supervises the operation of the computer network during a conference. Each of its employees has to choose one of the possible shifts, but shifts scheduled at different times  of the conference may incur different  opportunity cost for various persons. Moreover, we can assume that everybody prefers to have just one uninterrupted period to spend at work.

The  constraints described  above could be represented  by an undirected graph whose vertices are rooms or time intervals and there is an edge between two vertices if the respective rooms are adjacent; or the corresponding  time periods  immediately follow each other. Each agent should obtain a connected piece of the underlying neighborhood  graph. Complete graphs correspond to the 'classical' case with no connectivity constraints. By contrast, paths and stars represent the simplest combinatorial structures, and yet such graphs can be used to model a rich variety of situations. For example, a path may represents successive time intervals and a star corresponds to a building with a central foyer and mutually non-connected rooms accessible from this foyer.

\paragraph{Related work}

The mathematical theory of fair division started with the seminal
paper of \cite{Steinhaus}.
Although originally most researchers focused on \emph{divisible} items
(the topic is also known as the \emph{cake-cutting problem}), fair division of
indivisible goods has also received a considerable amount of attention in
the traditional and computational social choice literature. The
interested reader can for instance read the survey by
\cite{bouveret-chap} for an overview of this topic.

Several recent papers combine graphs and fair division of indivisible items by assuming that agents are located in the vertices of a graph. For instance, \cite{EndrissAAAI07} and \cite{GourvesLesca17} consider the
case where each agent has an initial endowment of goods and can trade with her neighbors in the graph. The authors study outcomes that can be achieved by a sequence of mutually beneficial deals.

In the paper by \cite{BouveretCechl2017} for the first time a
graph-based constraint on agents' bundles has been imposed. The
authors model the items as vertices of a graph and they require that
each agent should receive a bundle that forms a connected
subgraph. They show that even if the underlying graph is a path the problems to decide whether there exist proportional or envy-free divisions with connected shares are \NP-complete. In case of stars,
envy-freeness is also intractable, but a proportional division can be
found by a polynomial algorithm. In addition, \cite{BouveretCechl2017}
study maximin (MMS) allocations, a fairness notion
introduced by \cite{Budish11}. They show that a MMS  allocation
always exists if the underlying graph is a tree, provide a polynomial
algorithm to find one in this case and show that a cycle may admit
no MMS allocation.

This work inspired other authors who also considered allocations on a
graph with the additional constraint that each bundle has to be
connected. \cite{Suksompong17} considers only paths and studies
approximately fair (proportional, envy-free and equitable) allocations
up to an additive approximation factor. He shows that for all the
three fairness notions there is a simple approximation guarantee
derived from the maximum value of an item and that for proportionality, as well as equitability, an allocation achieving this bound can be
computed efficiently.

\cite{Bilo_et_al} deal with two relaxations of envy-freeness: envy-free
up to one good, briefly EF1 (an agent does not think that another
agent's bundle, possibly with one of its outer items removed, is more
valuable than her own bundle) and envy-free up to two outer goods,
briefly EF2.  They characterize graphs admitting allocations
fulfilling these notions and provide efficient algorithms to find such
allocations.

\cite{LoncTruszczynskiIJCAI2018} study MMS allocations of goods. For the underlying graph being a cycle they
identify several cases  when MMS-allocations always exist
(\textit{e.g.} at most three agents and at most 8 goods, the number of goods
not exceeding twice the number of agents, fixed number of agent types)
and provide results on allocations guaranteeing each agent a certain
portion of her maximin share.

\cite{IgarashiPeters2018} study Pareto-optimality. They show that for
paths and stars a Pareto optimal allocation can be found efficiently,
but the problem is \NP-hard even for trees of bounded pathwidth. They
also show that it is NP-hard to find a Pareto-optimal allocation that
satisfies maximin share even on a path.

It is worth noting that although the study of connected fair division
is relatively recent in the context of indivisible items, there is an
important literature on the contiguity requirement in the context of
cake-cutting. From the great number of various results we consider
among the most interesting ones the contrast between the proven
existence of envy-free \citep{Stromquist80} and equitable
\citep{AumannDomb2010,CechlarovaDobosPillarova} divisions with
connected pieces and, on the other hand, the nonexistence of finite
algorithms for computing them
\citep{Stromquist80, CechlarovaPillarova2012b}.

\smallskip

Beyond considering the connectivity constraint, an important aspect in
which our work departs from the mainstream literature on fair division
is the fact that we consider negative items (chores).

Chore division of divisible goods was mentioned for the first
time by \cite{Gardner}. 
Although straightforward modifications of some algorithms for positive
utilities can also be applied to the chore division context (\textit{e.g.}, the
Moving Knife algorithm for proportional divisions), it happens more
often that chore division problems are more involved than their
fair division counterparts. For example, the discrete algorithm for
obtaining an envy-free division of divisible chores for three persons
by Oskui \cite[pages 73-75]{Robertson98} needs nine cuts and the
procedure based on using four moving knifes makes 8 cuts
\citep{PetersonSu2002}, while in the Selfridge's algorithm for
envy-free (positive) division \citep{Woodall1980} five cuts
suffice. If the number of agents is 4, the moving-knife procedure by
\cite{BTZ} 
needs 11 cuts, while the first algorithm for envy-free division of
chores, given by \cite{PetersonSu2002}, needs 16
cuts. 

The fact that chore division has been given much less attention in
research is mirrored also in monographs on fair division. For example,
\cite{Robertson98} only deal with chores in Section 5.5; of Chapter 7
on cake cutting in the book \emph{Economics and Computation} edited by
\cite{EconomicsComputation} only Section 7.4.6 treats chores and
Chapter 12 on fair division of indivisible goods in the \emph{Handbook of
Computational Social} Choice by \cite{bouveret-chap} 
does not mention chores at all.

Of the more recent works on chore division let us mention
\cite{Caragiannis2012} who deal with divisible and indivisible goods
and chores from the point of view of the price of fairness for three
fairness notions. \cite{HeydrichStee15} consider the price of fairness
for the fair division of infinitely divisible
chores. \cite{AzizRauchSchryenWalsh} also deal with chores; the
considered fairness notion is maximin share guarantee.
In the divisible chores setting, \cite{Dehghani2018} give the first discrete and bounded protocol for envy-free chore division problem and \cite{FarhadiHajighayiIJCAI2018} prove the $\Omega(n\log\ n)$ lower bound for the number of queries  in a proportional protocol for chores.

\smallskip

Finally, in the context of indivisible items, where the existence of fair allocations
cannot be ensured, it is rather natural to study  approximations of the fairness
criteria.  \cite{Markakis2011WorstCase} prove a
worst case guarantee on the value that every agent can have and they
propose a polynomial algorithm for computing allocations that achieve
this guarantee. By contrast, they show that if $\Pol \ne \NP$ there is
no polynomial algorithm to decide whether there exists an allocation
where each agent can get a bundle worth at least $1/cn$ for any
constant $c\ge 1$. \cite{Lipton04} focus on the concept of
envy-freeness. They show that there exists an allocation with maximum
envy not exceeding the maximum marginal utility of a good. However,
the problem of computing allocations with minimum possible envy is
hard, even in the case of additive utilities. 

\paragraph{Our contribution}

In this paper, we extend the work of \cite{BouveretCechl2017} about
fair division of goods on a graph. We also use the connectivity
constraints defined by a graph on the items. However, we deal with
\emph{nondisposable undesirable} items, often called \emph{chores}.
We use three classical fairness criteria, namely proportionality,
envy-freeness and equitability, and two different individual
disutility aggregation functions: additive and maximum based. We show
that dealing with goods and chores is inherently different, in
particular, it is impossible to transform a chore instance simply by
negating the utility values and applying the algorithm that works for
goods. 

Then we investigate the complexity of the problems to
find a fair allocation of chores. It is known that these problems are hard on complete graphs in the additive case, but the maximum-based case, as far as we know, has not been studied before. Therefore, we complement the picture by providing efficient algorithms for proportionality and equitability, and show that envy-freeness leads to an \NP-complete problem. Further, we  concentrate on two special classes of graphs: paths and stars.

In more detail, we provide a general reduction for paths that directly
implies \NP-completeness of the existence problems for all the considered fairness criteria and both disutility aggregations. Moreover, by a very small modification of the reduction
we obtain that these problems are hard even in the binary
case \textit{i.e.}, when disutility values for chores are either 0 or 1. 

By contrast, if the underlying graph is a star, we propose an
efficient algorithm, based on bipartite matching techniques, to decide
whether a valid allocation exists such that each agent has disutility
0. This in turn implies that envy-freeness and equitability criterion
admit efficient algorithms for decision problems in the binary
case. Matching techniques lead to efficient algorithms also in the
maximum-based case, even when disutilities are not restricted to be binary. In the additive case we provide an efficient algorithm for proportionality. On the other hand, it is \NP-complete to decide the
existence of envy-free or equitable valid allocations on a star.

\medskip

This paper is organized as follows. In Section~\ref{sec:model} we
introduce the model of connected fair division of indivisible chores
and the definitions of the various fairness 
criteria we use in the paper. Our technical results are presented in
Sections~\ref{sec:complete}, \ref{sec:paths}  and \ref{sec:stars} which respectively deal
with the cases where the underlying graph is a complete graph, a path and a
star. Table~\ref{tab:overview} shows an overview of the results introduced in this paper. Finally, we discuss the results and some open problems in
Section~\ref{sec:conclusion}.

\begin{table}[ht]
\begin{center}
\begin{tabular}{lcccccc}
      \toprule
      & \multicolumn{2}{c}{complete graph} & \multicolumn{2}{c}{path} &
\multicolumn{2}{c}{star}\\
  & additive & maximum & additive & maximum  & additive & maximum \\
      \midrule
proportionality & \NPC & \Pol & \NPC & \NPC$^a$ & \Pol & \Pol\\
envy-freeness & \NPC & \NPC & \NPC & \NPC$^a$ & \NPC$^b$ & \NPC$^b$\\
equitability     & \NPC & \Pol & \NPC & \NPC$^a$  & \NPC & \Pol\\
      \bottomrule
    \multicolumn{7}{l}{\footnotesize $^a$ Even with binary disutilities}\\
    \multicolumn{7}{l}{\footnotesize $^b$ Polynomial with strict disutilities}
    \end{tabular}
\end{center}

\vskip1ex
\caption{Overview of the complexities of the existence problems}\label{tab:overview}
\end{table}

\section{Model}\label{sec:model}

$N=\{1,2,\dots,n\}$ 
is the set of agents, $G=(V,E)$ is an undirected graph.
Vertices $V$ represent objects, and they are interpreted as nondisposable chores. Each agent $i\in N$ has 
a non-negative disutility (cost, regret) function $u_i: V \rightarrow \bbR_{+}$. The $n$-tuple of   disutility functions is denoted by 
$\calU$. Let $m$ denote the number of vertices of $G$ (chores).  

An instance of {\sc Connected Chore Division} {\sc CCD} is a triple $\I=(N,G,\calU)$. When we
shall occasionally talk about  problems with positively interpreted utility, we shall call them {\sc Connected Fair Division} problems, briefly {\sc CFD}.

Any subset  $X \subseteq V$ is called a {\em bundle}. We consider two disutility aggregation functions. In the {\em additive} case the disutility agent $i$ derives from bundle $X$ is equal to the sum of the disutilities of the objects that form the bundle, \textit{i.e.}  $u_i^{add}(X)=\sum_{v \in X}u_i(v)$. In the {\em maximum-based} case the disutility of a bundle is derived from the maximum disutility of an object in the bundle, \textit{i.e.} $u_i^{max}(X)=\max\{u_i(v) \mid v\in X\}$. If the aggregation function  is not specified or if it is clear from the context, the superscript may be omitted.  
In the maximum-based extension we shall also consider an important {\em binary} case when the disutilities of agents for objects are either 1 or 0. The binary case represents the situation of  agents finding some objects negative without expressing the ``degree of negativity'' and some other objects  bring them neither  nuisance nor  joy. 

We assume  that the disutilities are normalized. This means that  $u_i^{add}(V)=1$ and $u_i^{max}(V)=1$ for each agent $i$ in the additive and  in the maximum-based case, respectively. 

An \emph{allocation} is a function $\pi:N \rightarrow 2^V$ assigning each agent a bundle of objects. An allocation  $\pi$ is {\em valid} if:
\begin{itemize}\itemsep0pt
\item for each agent $i \in N$, bundle $\pi(i)$ is connected  in $G$;
\item  $\pi$ is complete, \textit{i.e.}, $\bigcup_{i\in N}\pi(i)=V$  and;
\item no item is allocated twice, so that $\pi(i) \cap \pi(j) =\emptyset$ for each pair of distinct agents $i,j \in N$.
\end{itemize}
We say that a valid chore allocation $\pi$ is:
\begin{itemize}
\item
{\em proportional} if $u_i(\pi(i)) \le \frac{1}{n}$ for all $i \in N$;
\item
{\em   envy-free} if $u_i(\pi(i)) \le u_i(\pi(j))$ for all $i,j \in N$; 
\item
{\em equitable} if $u_i(\pi(i)) = u_j(\pi(j))$ for all $i,j \in N$. 
\end{itemize}

Let us remind the reader that the corresponding notions of proportionality and envy-freeness for {\sc CFD} are defined by reversing the respective inequalities. Equitability is defined in the same way in both cases, but since no results for goods with connected bundles have been published yet, we shall try to close this gap.

Notice that with maximum-based disutility aggregation, proportionality does not have a similar interpretation as in the additive case, where dividing the disutility by the number of agents  corresponds to sharing the total burden. Still, we shall use this term also in the case when agents care for the worst item in their bundle, meaning that we seek an allocation that restricts the disutility by the same threshold for everybody. 

Taking into account the large number of existing results concerning the fair division of goods, one could be tempted to try and adapt these results to the case of chores. First, observe that if in a fair division instance  there are more agents than items, no proportional and envy-free allocation can exist, which is not necessarily the case with chores. Further, although it seems natural to transform a chore division instance  to a `dual' fair division instance simply by replacing   each disutility $u_i(v)$ by a `reverse' desirable utility $M-u_i(v)$ for each agent $i$ and each object $v$ ($M$ is a sufficiently large real number), we show that the properties of the related  instances do not translate. 

\begin{example}
Let us consider the CCD instance ${\cal I}$ with three agents 1,2,3 and four vertices $v_1,v_2,v_3,v_4$ arranged on a path in this order and disutilities given in the left half of Table \ref{t_1}. Its right half shows the  utilities for the `dual` {\sc CFD}  instance ${\cal I}'$ obtained for $M=10$.\footnote{For typographical reasons, the disutilities of agents in this and the following example are not normalized to 1, but to 10 in the {\sc CCD} instances (tables in the left) and to 30 in the corresponding dual {\sc CFG} instances (tables in the right).  }

\begin{table}[ht]
  \begin{center}
    \begin{tabular}{ccccc}
      \toprule
      & $v_1$ & $v_2$ & $v_3$ & $v_4$\\
      \midrule
      agent 1 & 6 & 4 & 0 &0\\
      agent 2 & 7 & 0 & 1 &2\\
      agent 3 & 5 & 0 & 0 &5\\
      \bottomrule
    \end{tabular}
    \qquad
    \begin{tabular}{ccccc}
      \toprule
      & $v_1$ & $v_2$ & $v_3$ & $v_4$\\
      \midrule
      agent 1 &  4 &  6 & 10 & $10^\star$\\
      agent 2 & $3^\star$ & $10^\star$ & 9 &8\\
      agent 3 & 5 & 10 & $10^\star$ &5\\
      \bottomrule
    \end{tabular}
  \end{center}
  \caption{The  {\sc CCD} (left) and {\sc CFD} (right) instances for proportionality notion}\label{t_1}
\end{table}   

\noindent A proportional chore allocation should give each agent a bundle of disutility at most 10/3. ${\cal I}$ does not admit a valid proportional allocation, as nobody is willing to take vertex $v_1$. In the dual {\sc CFD} instance ${\cal I}'$  a proportional valid allocation exists, simply give agent 1 vertex $v_4$, agent  2 bundle $\{v_1,v_2\}$ and agent 3 vertex $v_3$. This allocation is depicted with stars in the right half of Table \ref{t_1}.
\end{example}

\begin{example}
Now slightly change the disutilities of agent 3; the new {\sc CCD}  instance and its dual {\sc CFD} instance are  given in Table~\ref{t_2}.

\begin{table}[htbp]
  \begin{center}
    \begin{tabular}{ccccc}
      \toprule
      & $v_1$ & $v_2$ & $v_3$ & $v_4$\\
      \midrule
      agent 1 & 6 & 4 & $0^\star$ & $0^\star$\\
     agent 2 & 7 & $0^\star$ & 1 &2\\
      agent 3 & $0^\star$ & 5 & 5 &0\\
      \bottomrule
    \end{tabular}
    \qquad
    \begin{tabular}{ccccc}
      \toprule
      & $v_1$ & $v_2$ & $v_3$ & $v_4$\\
      \midrule
      agent 1 & 4 & 6 & 10 &10\\
      agent 2 & 3 & 10 & 9 &8\\
      agent 3 & 10 & 5 & 5 &10\\
      \bottomrule
\end{tabular}
    \caption{The  {\sc CCD} (left) and {\sc CFD} (right) instances for envy-freeness.}\label{t_2}
  \end{center}
\end{table}   

\noindent Now ${\cal I}$ has an envy-free valid allocation, namely $\pi(1)=\{v_3,v_4\}$,  $\pi(2)=\{v_2\}$ and $\pi(3)=\{v_1\}$, again depicted with stars in the left half of Table \ref{t_2}.
 However, there is no envy-free valid allocation in the dual fair division instance ${\cal I}'$.
To see this, let us first realize that as there are three agents and four items, exactly one of the agents has to receive a bundle consisting of two vertices. Since each allocated bundle has to be connected, there are exactly three such two-elements bundles: $\{v_1,v_2\}$, $\{v_2,v_3\}$ and $\{v_3,v_4\}$.
One can see that each such bundle has utility strictly greater  than 10 for at least two agents, and as each agent values individual vertices at not more than 10, there will always be somebody envying the agent receiving the two-element bundle.
\end{example}

We see that similarly as in the 'classical' case of indivisible goods without connectivity constraints, the existence  of allocations fulfilling the above definitions is not ensured in general. Therefore we shall deal also with approximate fairness. We can ask whether  for a given $c\ge 1$, a valid chore allocation $\pi$ exists such that each agent $i \in N$ receives a bundle such that $u_i(\pi(i)) \le c\cdot\frac{1}{n}$,  or such that $u_i(\pi(i))\le c\cdot u_i(\pi(j))$ or $u_j(\pi(j))/c\le u_i(\pi(i))\le c\cdot u_j(\pi(j)$ holds for each pair of agents $i,j$.
Later in this paper we shall see that even for paths the problems to decide whether a valid allocation exists such that the disutility of each agent equals 0 is \NP-hard. This immediately implies intractability of the just formulated problems for any $c>1$.

We will consider the following computational problems that all 
take an instance $\I=(G,N,\calU)$ of CCD as their input. {\sc Prop-CCD}, {\sc EF-CCD} and {\sc EQ-CCD} ask whether ${\cal I}$ admits a proportional, envy-free and equitable allocation, respectively. If we want to stress which disutility aggregation functions is used, we insert prefix {\sc Add} or {\sc Max} to this notation.

It is easy to see that all the  considered  problems belong to the class \NP, as given an allocation, it can be verified in polynomial time whether it is valid and also whether it is proportional,   equitable (linear in the problem size)  or envy-free (in time $O(mn+n^2$)).

\section{Complete graphs}\label{sec:complete} 

Note that the classical case (without connectivity requirements) corresponds in our model to the underlying graph being complete. Then,  all the problems studied are hard  for the additive disutility aggregation. The intractability
can be proved using a reduction from \textsc{Partition} \citep[see \emph{e.g.}][]{DemkoHill98}.

\begin{prop}\label{thm:prop-2ag}
{\sc Add-Prop-CCD}, {\sc Add-EF-CCD} and {\sc Add-EQ-CCD} are \NP-complete even for two agents with the same disutility function and even if the underlying graph $G$ is bipartite.
\end{prop}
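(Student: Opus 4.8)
The plan is to reduce from the classical \NP-complete problem \textsc{Partition}, exploiting the fact that for two agents sharing a single disutility function all three fairness notions collapse to one and the same condition. Since membership in \NP\ has already been observed, it suffices to establish \NP-hardness.

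First I would record the key simplification. Fix the common additive disutility function $u := u_1 = u_2$, normalized so that $u(V) = 1$. In any valid allocation the two bundles partition $V$, hence $u(\pi(1)) + u(\pi(2)) = 1$. Proportionality demands $u(\pi(i)) \le \tfrac{1}{2}$ for both $i$, which together with this identity forces $u(\pi(1)) = u(\pi(2)) = \tfrac{1}{2}$. Envy-freeness demands $u(\pi(1)) \le u(\pi(2))$ and, since both agents use the same $u$, also $u(\pi(2)) \le u(\pi(1))$, again giving equality at $\tfrac{1}{2}$; equitability is the equation $u(\pi(1)) = u(\pi(2))$ verbatim. Thus, for two identical agents, each of the three problems asks exactly for a valid allocation that splits the total disutility into two halves of value $\tfrac{1}{2}$.

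Next comes the reduction itself. Given a \textsc{Partition} instance $a_1,\dots,a_m$ with $\sum_j a_j = 2S$, I would create one object $v_j$ per integer with $u(v_j) = a_j/(2S)$ and let both agents have this $u$ (so $u(V)=1$). For the complete-graph claim take $G = K_m$: every bundle is connected, so a valid allocation is just a partition of $\{v_1,\dots,v_m\}$, and by the previous paragraph a fair allocation exists iff some subset has disutility $\tfrac{1}{2}$, i.e.\ iff the $a_j$ split into two parts of sum $S$. To obtain the stronger statement for a \emph{bipartite} $G$ (note $K_m$ is not bipartite for $m\ge 3$), I would instead add two auxiliary ``hub'' objects $c_1,c_2$ of disutility $0$ and take $G = K_{2,m}$, the complete bipartite graph with parts $\{c_1,c_2\}$ and $\{v_1,\dots,v_m\}$; normalization is preserved since the hubs contribute nothing.

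Finally I would verify correctness of the bipartite construction by a short case analysis on where the hubs land. If $c_1$ and $c_2$ go to different agents, each bundle consists of one hub together with an arbitrary subset of the $v_j$, hence is connected; such allocations realize exactly the partitions of $\{v_1,\dots,v_m\}$, so a balanced one exists iff \textsc{Partition} is solvable. If both hubs go to the same agent, the other agent receives only $v_j$'s, and connectivity in $K_{2,m}$ forces that bundle to be a single object $v_j$; balance then requires $a_j = S$, which is itself a (trivial) \textsc{Partition} solution, so no spurious yes-instances arise. I expect this hub case analysis — confirming that the connectivity requirement neither rules out the intended equipartitions nor admits unintended balanced allocations — to be the only delicate point; the coincidence of the three notions and the arithmetic of normalization are routine.
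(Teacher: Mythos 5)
Your proposal is correct and follows the same route the paper indicates: the paper gives no detailed proof, only noting that hardness follows by a reduction from \textsc{Partition} (citing Demko and Hill), which is exactly your reduction, with the collapse of all three criteria to an exact $\tfrac{1}{2}$--$\tfrac{1}{2}$ split for two identical agents. Your $K_{2,m}$ hub gadget with two zero-disutility centres, together with the case analysis showing it introduces no spurious balanced allocations, is a clean way to realize the bipartiteness claim that the paper leaves implicit.
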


Now suppose that agents aggregate their disutilities using maximum and consider the following algorithm that we shall call {\it greedy}: assign each item $v$ to agent $i$ such that $u_i(v)=\min_{j\in N}u_j(v)$. 
It is easy to see that the greedy algorithm produces a valid  allocation $\pi$  that minimizes $\max_{i\in N} u_i^{max}(\pi(i))$.

If we want to decide  the existence of an equitable allocation, for each possible disutility target $w$ we do the following. 
We create the bipartite graph $H_w=(N,V,L_w)$ with the two parts of the vertex set corresponding to the set of agents and to the set of chores, respectively, and with the edge set defined by $(i,v)\in L_w$ if $u_i(v)=w$. Then we check whether $H$ admits a matching that covers $N$. If the answer is no, there is no valid allocation $\pi$ such that $u_i^{max}(\pi(i))=w$ for each agent. If the answer is yes, we try to assign the remaining chores greedily. We conclude that there exists an equitable allocation with the common disutility equal to $w$ if and only if the greedy algorithm does not assigns to any agent a chore for which she has a disutility exceeding $w$. This can be summarized as follows. 

\begin{theorem}\label{thm_max_Prop_EQ}
{\sc MAX-PROP-CCD} and  {\sc MAX-EQ-CCD} can be solved in polynomial time.
\end{theorem}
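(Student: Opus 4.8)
The plan is to handle both problems around the same greedy primitive, exploiting the fact that on a complete graph every bundle is automatically connected, so a valid allocation is just an arbitrary partition of $V$ among the agents (with empty bundles allowed). The starting observation, already noted above, is that the greedy allocation minimizes $\max_{i\in N}u_i^{max}(\pi(i))$ and that its optimal value equals $\max_{v\in V}\min_{j\in N}u_j(v)$: for any allocation and any item $v$ assigned to agent $i$ we have $u_i^{max}(\pi(i))\ge u_i(v)$, so the global maximum can never fall below $\max_{v}\min_{j}u_j(v)$, and greedy attains this bound item by item.

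For {\sc MAX-PROP-CCD} this settles matters immediately. A proportional allocation requires $u_i^{max}(\pi(i))\le\frac1n$ for every $i$, that is, an allocation whose maximum disutility is at most $\frac1n$. Since greedy minimizes that maximum, I would simply run it and answer ``yes'' iff $\max_{v}\min_{j}u_j(v)\le\frac1n$: if the inequality holds the greedy allocation is itself proportional, and if it fails no allocation can do better, so none is proportional. Computing $\min_j u_j(v)$ for each $v$ and taking the maximum is $O(mn)$.

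For {\sc MAX-EQ-CCD} I would first note that in any equitable allocation the common value $w=u_i^{max}(\pi(i))$ is attained by the maximum-disutility item of some bundle, hence $w$ is one of the at most $mn$ entries $u_i(v)$ of the disutility table; it therefore suffices to test each candidate $w$ separately. Fixing $w>0$, an allocation has all max-disutilities equal to $w$ exactly when (i) every agent receives at least one item it values exactly $w$ (a \emph{witness}) and (ii) no agent receives an item it values more than $w$. Condition (i) is encoded as a matching of $H_w=(N,V,L_w)$ covering $N$, reserving a distinct witness for each agent; the leftover items are then placed greedily, and condition (ii) is feasible iff greedy never forces a value above $w$, equivalently iff $\min_j u_j(v)\le w$ for every remaining $v$. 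Both directions are routine: from a covering matching plus a successful greedy assignment one builds an equitable allocation with common value $w$ (witnesses give max $\ge w$, the constraint gives max $\le w$, and all items are placed), whereas from an equitable allocation the witnessing items form a covering matching and every item is valued $\le w$ by its owner, so the greedy test passes. Running a bipartite matching once per candidate value keeps the whole procedure polynomial.

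The one point needing care is the borderline value $w=0$, where empty bundles matter: a value-$0$ bundle can arise either from zero-valued items or, by convention, from the empty set, so insisting on a matching covering $N$ would be too strong (for instance when $m<n$). I would handle $w=0$ by the direct greedy test used for proportionality, declaring a value-$0$ equitable allocation to exist iff $\max_{v}\min_{j}u_j(v)=0$, i.e.\ iff every item can be parked on some agent that values it $0$. I expect the main (modest) obstacle to be exactly this bookkeeping around empty bundles together with the clean statement of the equivalence in (i)--(ii); once these are pinned down, the matching-plus-greedy decomposition and its polynomial running time follow directly.
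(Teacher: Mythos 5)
Your proposal is correct and follows essentially the same route as the paper: the greedy assignment $v\mapsto\arg\min_j u_j(v)$ decides proportionality, and equitability is decided by enumerating candidate common values $w$, finding a matching in $H_w$ that covers $N$ to supply each agent a witness item valued exactly $w$, and placing the leftover items greedily subject to the threshold $w$. Your explicit treatment of the $w=0$ / empty-bundle corner case is a small refinement the paper glosses over, but it does not change the algorithm or its analysis.
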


\noindent By contrast, envy-freeness criterion leads to an intractable problem. 
In the following proof we provide a reduction from  the \NP-complete problem {\sc (2,2)-e3-sat} \citep{BKS03} that asks,
given a Boolean formula $F$ in Conjunctive Normal Form, where each
clause in $F$ has size three, and each variable occurs exactly twice
unnegated and exactly twice negated, whether $F$ is satisfiable.

\begin{theorem}\label{thm_max_EF}
 {\sc MAX-EF-CCD} is \NP-complete even if the underlying graph $G$ is complete.
\end{theorem}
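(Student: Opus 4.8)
The plan is to establish \NP-hardness by a polynomial reduction from {\sc (2,2)-e3-sat}; membership in \NP has already been recorded for all our problems, so only hardness remains. Since $G$ is complete, the connectivity requirement is vacuous and a valid allocation is nothing but a partition of $V$ into (possibly empty) bundles, one per agent. The whole argument rests on the following reading of envy-freeness under maximum aggregation: $\pi$ is envy-free iff for every agent $i$ and every $j\neq i$ the pile $\pi(j)$ contains some item $v$ with $u_i(v)\ge u_i(\pi(i))$. Equivalently, each agent's own worst item is, from her viewpoint, no worse than the worst item she sees in any rival pile, so an agent is content exactly when no other pile looks strictly ``cleaner'' to her than her own. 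I will use a small fixed set of disutility levels (a maximal level $1$, a medium level $\mu$, and possibly $0$), rescaled at the end so that each agent's maximum over $V$ equals $1$ as normalization requires.

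Given a formula $F$ with variables $x_1,\dots,x_k$ and clauses $C_1,\dots,C_c$ (where $3c=4k$), I would attach to each variable $x$ a \emph{choice gadget}: two dedicated agents $a_x,b_x$ together with two items $T_x,F_x$, where $a_x$ and $b_x$ value both $T_x$ and $F_x$ at the medium level $\mu$ and every other item at the maximal level $1$. The key gadget lemma is that in any envy-free allocation $T_x$ and $F_x$ must be split, one going to $a_x$ and the other to $b_x$: giving a variable agent an outside item, or an empty pile, or both items to one of the two agents, always lets that agent or its partner point to a strictly cleaner pile. I read the resulting split as a truth value, say $x=\mathrm{true}$ when $a_x$ receives $T_x$. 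This rigidity is what guarantees that an envy-free allocation induces a well-defined, consistent assignment.

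For each clause $C_j$ I would add a \emph{clause gadget} centred on a clause agent $c_j$, whose disutilities for the items $T_x,F_x$ of the three variables occurring in $C_j$ encode which configurations satisfy $C_j$. The target behaviour is that $c_j$ can be made non-envious precisely when at least one of its three literals is set to true: a true literal should leave some pile ``dirty enough'' for $c_j$ (or, dually, should create room to park a dedicated clause item) so that $c_j$ never faces a strictly cleaner alternative, whereas if all three literals are false every placement leaves $c_j$ with a strictly cleaner rival pile to envy. Because each variable occurs only a bounded number of times, a single split of $\{T_x,F_x\}$ can be \emph{observed} independently, through the piles, by every clause agent whose clause contains $x$, so no conflict arises from a variable appearing in several clauses. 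All disutilities are then uniformly rescaled so that each agent's maximum equals $1$.

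Correctness splits into the two usual directions. For the forward direction, from a satisfying assignment I place each variable gadget in the corresponding state and, clause by clause, use a satisfying literal to route the clause gadget into its non-envious configuration, checking that no agent—variable, clause, or auxiliary—envies another. For the reverse direction, I start from an arbitrary envy-free allocation, invoke the gadget lemma to extract a consistent assignment, and then argue that the envy-freeness of each clause agent forces the corresponding clause to be satisfied. The main obstacle, demanding the most care, is the clause gadget: envy-freeness is inherently a \emph{universal} statement (no rival pile may look cleaner), whereas clause satisfaction is \emph{existential} (some literal is true), so the construction must convert one into the other, typically by making a dedicated clause item parkable exactly when a satisfying literal is present. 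Getting this equivalence exact—while keeping the variable gadgets rigid, preventing spurious envy-free allocations, correctly handling empty bundles (whose disutility is $0$ and which therefore threaten to make every other agent envious), and maintaining normalization—is where the real work lies; the remaining verifications are routine, and the construction is clearly polynomial, yielding \NP-completeness.
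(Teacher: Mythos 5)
Your high-level plan (reduction from {\sc (2,2)-e3-sat}, variable gadgets whose resolution encodes a truth value, clause gadgets that are ``satisfiable'' exactly when a literal is true) matches the shape of the paper's proof, but the two load-bearing pieces are missing or wrong. First, the variable-gadget rigidity lemma is false with only the levels $\{0,\mu,1\}$ you propose. Under maximum aggregation, agent $i$ envies $j$ iff \emph{every} item in $\pi(j)$ has $u_i$-value strictly below $u_i(\pi(i))$. So if $a_x$ takes $\{T_x,F_x\}$ together with any outside item (valued $1$ by $a_x$), her own disutility is $1$ and she envies nobody whose pile contains a level-$1$ item; and $b_x$ does not envy $a_x$ either, since $a_x$'s pile now contains an item $b_x$ values at $1$. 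More globally, \emph{any} allocation in which every bundle contains an item that every observer values at the top level is envy-free regardless of $F$, and with your near-uniform disutilities such allocations are abundant. This is not a detail to be handled later; it is the central obstruction, and your sketch offers no mechanism against it. Second, the clause gadget --- the conversion of the existential ``some literal is true'' into the universal ``no rival pile looks cleaner'' --- is exactly where you say ``the real work lies'' but is left entirely unconstructed, so the reduction is not actually given.

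The paper's construction resolves both points with one device you do not have: a global ordering of the chores in which the \emph{default} disutility of chore $v$ to every agent is its position $\beta(v)$, a distinct positive integer, with only a few designated agent--chore pairs lowered to $0$ or to a small $\varepsilon$. Because the positions are all distinct, the agent holding the currently largest-position chore envies every not-yet-treated agent unless she is its designated owner; an induction on decreasing $\beta$ then forces every dummy chore into its owner's bundle, pinning the allocation down completely except for the literal chores. Pairs of agents (variable agent $p_j$ versus dummies $q_j^k$ holding $y_j^k$ at disutility $\varepsilon$, clause agent $b_i$ versus dummy $b'_i$ holding $z'_i$ at $\varepsilon$) then turn ``do not envy'' into the required combinatorial constraints: $p_j$ must hold two positive or two negative literal chores, and $b_i$ must hold a literal chore of a true literal. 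If you want to salvage your write-up, you need to replace the three-level disutility scheme with a graded scheme of this kind (or some equivalent anti-degeneracy device) and spell out the clause gadget explicitly.
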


\begin{proof}
 Let formula $F$ 
 be given with a set of variables $X=\{ x_1, \ldots , x_s\}$ and  set of clauses $C=\{ c_1, \ldots , c_t\}$. Let $L$ be the set of literals i.e., $L=\bigcup_{j=1}^s\{ x_j^1, x_j^2, \bar{x}_j^1, \bar{x}_j^2\}$ and for any $\ell \in L$, let $c(\ell)$ denote the clause containing literal $\ell$. Let $L_i$ denote the set of literals in clause $c_i$.

For each formula $F$ we construct an instance of {\sc MAX-EF-CCD} as follows. The set of chores is 
$V=W\cup Z\cup Z'\cup Y$, where 
$W=\bigcup_{j=1}^sW_j$ with $W_j={\{ w_j^1, w_j^2, \bar{w}_j^1, \bar{w}_j^2\}}$ are literal chores, $Z=\{ z_1,z_2,\dots,z_s\}$ are clause chores, $Z'=\{z'_1,z'_2,\dots,z'_s\}$\ are dummy clause chores and 
$Y=\bigcup_{j=1}^sY_j$, with $Y_j ={\{ y_j^1, y_j^2, y_j^3, y_j^4\}}$ are dummy variable chores.  
The chore corresponding to  literal $\ell \in L$  will be denoted $w(\ell)$. 

We further assume that the set of chores is ordered $W_1,W_2,\dots,W_s,Z,Z',Y_1,Y_2,\dots,Y_s$ while the ordering within each subset is the same as the order in which the chores in the respective subset have been written above. The position $\beta(v)$ of a chore $v\in V$ in this ordering is equal to the disutility $u_i(v)$ derived by each agent $i\in N$ from  chore $v$, unless defined differently.

The set of agents is $N=B\cup B'\cup P\cup Q$, where $B=\{ b_1, b_2,\dots, b_t\}$ are clause agents, $B'=\{ b'_1, b'_2,\dots, b'_t\}$ are dummy clause agents, $P=\{ p_1, p_2,\dots, p_s\}$ are variable agents and $Q=\bigcup_{j=1}^sQ_j$ with $Q_j={\{ q_j^1, q_j^2, q_j^3, q_j^4\}}$ are dummy variable agents. 

The disutilities are defined in Table \ref{tab_max_EF}, where for each agent we list the chores with disutilities equal to 0 and to $\varepsilon$, where $0<\varepsilon<1$ is fixed. The disutility of any chore $v$ to agent who does not have $v$ displayed in this table is equal to $\beta(v)$.

\begin{table}[ht]
\begin{center}
\begin{tabular}{lll}
      \toprule
      agent& disutility equal 0 & disutility equal $\varepsilon$ \\
      \midrule
      $b_i,i=1,2,\dots, t$\qquad \qquad & $z_i,\{w(\ell)\ | \ell\in L_i\}$  \qquad \qquad& -- \\
      $b'_i,i=1,2,\dots, t$\qquad \qquad & $z_i$ & $z'_i$ \\
      $p_j,j=1,2,\dots, s$\qquad \qquad & $w_j ^1,w_j ^2, \bar{w}_j^1, \bar{w}_j^2$ &-- \\
      $q_j^1,j=1,2,\dots, s$\qquad \qquad & $w_j ^1,\bar{w}_j^1$ &  $y_j^1$ \\
      $q_j^2,j=1,2,\dots, s$\qquad \qquad & $w_j ^1,\bar{w}_j^2$ &  $y_j^2$ \\
      $q_j^3,j=1,2,\dots, s$\qquad \qquad & $w_j ^2,\bar{w}_j^1$ &  $y_j^3$ \\
      $q_j^4,j=1,2,\dots, s$\qquad \qquad & $w_j ^2,\bar{w}_j^2$ &  $y_j^4$ \\
      \bottomrule
    \end{tabular}
\end{center}
\caption{Disutilities in the proof of Theorem \ref{thm_max_EF}.}\label{tab_max_EF}
\end{table}

Let us briefly explain how the reduction works before proving that it is correct. Each variable agent
will receive a subset of literal chores that correspond to her variable. Each dummy variable agent will
receive her corresponding dummy variable chore and will envy the corresponding variable agent as soon
as she does not receive a subset of literal chores containing either two positive literals or two negative
literals. Each clause agent will receive a clause chore as well as at least one literal chore associated with
one of her literals. Clause chores ensure that no variable agent will envy a clause agent. Furthermore, each
dummy clause agent will receive her corresponding dummy clause chore and will envy her corresponding
clause agent as soon as she does not receive at least one of her corresponding literal chores.

Assume first that $f$ is a truth assignment that satisfies all clauses in $C$. We construct from $f$ an assignment $\pi$  as follows. For each variable $x_j$, if $x_j$ is false according to $f$ then $\pi(p_j)=\{w_j^1,w_j^2\}$, otherwise $\pi(p_j)=\{\bar{w}_j^1,\bar{w}_j^2\}$. Furthermore, 
$\pi(b_i)=\{z_i\}\cup \{w(\ell), \ell\in L_i$\ and\ $\ell$ is true in $f\}$.  
Finally, $\pi(b'_i)=\{z'_i\}$ for $i=1,2,\dots,t$ and $\pi(q_j^k)=\{y_j^k\}$ for $j=1,2,\dots,s$ and $k=1,2,3,4$. Clearly, $\pi$ is valid.
It is easy to check that this allocation is envy free. Namely, the disutility  agents of $B\cup P$ receive in $\pi$ is 0, so they do not envy. Take a dummy agent $a$. This agent does not envy because $u_a(\pi(a))=\varepsilon$ and  for each agent $a'\ne a$ the bundle $\pi(a')$ contains a chore $v$ such that $u_a(v)$ is a strictly positive integer.

Conversely, suppose that there is a valid assignment $\pi$ such that no agent envies another one. 
In the first part of the proof we use mathematical induction on the reverse ordering $\beta(v)$ of the chores in the following way:  we take the next chore $v$ and  argue that $v$ must belong to the bundle of a certain agent $i$. Let us say that agent $i$ was treated {\it treated}.

As the disutility of $y_s^4$ is maximum of all chores, if $y_s^4\in \pi(a)$ for any agent $a\ne q_s^4$ then $a$ will envy any other agent. Therefore $y_s^4\in \pi(q_s^4)$. Now suppose that $y_j^k\in \pi(q_j^k)$ for each $y_j^k\in Y$ such that $\beta(y_j^k)>u$. 
Take $y_j^k$ such that $\beta(y_j^k)=u$. If $y_j^k\in \pi(a)$ for some agent $a\ne q^k_j$ then $a$ will envy any other agent that was not treated yet.


Similarly, by induction  for $i=t,t-1,\dots,1$ we show that $z'_i\in \pi(b'_i).$ 
As the disutility of $z'_t$ is maximum of all chores that have not yet been assigned, if $z'_t\in \pi(a)$ for some agent $a\ne b'_t$ then $a$ will envy any other agent in $N\setminus Y$. Therefore $z'_i\in \pi(b'_i).$  Now suppose that $z'_i\in \pi(b'_i)$ for each $i>k$. Take $z'_k$. If $z'_k\in \pi(a)$ for any agent $a\ne b'_k$ then $a$ will envy any agent not treated so far. 

By an analogical inductive argument we show that   $z_i\in \pi(b'_i)$ or $z_i\in \pi(b_i)$ for each $i=t, t-1,\dots, 1$  because otherwise the agent that gets this chore will envy any untreated agent (for example, an agent in  $P$).

Now  we know that the disutility of each agent $q^k_j$ in $\pi$ is at least $\varepsilon$ and  $q^k_j$  does  not envy agent $p_j$. So we must have that  either  $\{w_j^1,w_j^2\}\subseteq \pi(p_j)$ or  $\{\bar{w}_j^1,\bar{w}_j^2\}\subseteq \pi(p_j)$ for each $j=1,2,\dots,s$.
Let us say that $x_j$ is {\tt false} in the former case and that $x_j$ is {\tt true} in the latter case. Finaly, so as no agent $b'_i$ envies $b_i$, we get that $\pi(b_i)$ must contain at least one chore $w(\ell)$ for $\ell\in L_i$, and due to the truth values and assignment of chores in $W$ defined above, this chore must  correspond to a true literal in clause $c_i$. Hence we obtain an assignment of truth values that  makes $F$ true.
\end{proof}

\section{Paths}\label{sec:paths}
Even if the underlying graph is restricted to be a path then all the considered chore
division problems are intractable, as we now show. All the proofs in
this section are based on the same construction starting from an instance of  {\sc (2,2)-e3-sat}. 

So let a formula $F$ as an instance of {\sc (2,2)-e3-sat} be given with variables $X=\{x_1,\dots, x_s\}$ and   clauses $C=\{c_1,\dots, c_t\}$.
By $L$ we denote the set of literals in $F$, \textit{i.e.} $L=\cup_{j=1}^s\{x_j^1, x_j^2, \bar{x}_j^1, \bar{x}_j^2\}$ and $c(\ell)$ for  $\ell\in L$ denotes the  clause that contains literal $\ell$. Notice that the structure of the formula implies $3t=4s$ and hence $s=3t/4$.

Let $c\ge 1$ be given.
We construct an instance ${\cal I}$  of CCD with the set of chores  $V=Y\cup Z\cup D$ where $Y=\{y_1,\dots,y_t\}$ are {\it clause} chores,
$Z=\cup_{j=1}^s\{z_j^1, z_j^2, \bar{z}_j^1, \bar{z}_j^2\}$ are {\it literal} chores and $D=\{d_1,\dots,d_s\}$ are  {\it variable} chores. The number of chores is thus $m=5s+t$. The graph $G$ defining the neighborhood relation  between chores (illustrated  in Figure \ref{fig1}) has the edges
\begin{itemize}\itemsep0pt
\item $(y_i,y_{i+1})$ for $i=1,2,\dots, t-1$;
\item $(y_t,z_1^1)$; 
\item $(z_j^1, z_j^2); (z_j^2, d_j), (d_j,\bar{z}_j^1),(\bar{z}_j^1,\bar{z}_j^2)$ for $j=1,2,\dots,s$;
\item  $(\bar{z}_j^2, z_{j+1}^1)$ for $j=1,2,\dots,s-1$.
\end{itemize}

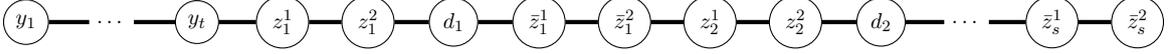
\begin{figure*}[bth]
\centering
\begin{tikzpicture}[scale=0.75, transform shape]
	\node[draw, circle](1) at (0,0) {$y_1$};
		\node(2) at (1.5,0) {$\dots$};
	\node[draw, circle](3) at (3,0) {$y_t$};
	\node[draw, circle](4) at (4.5,0) {$z_1^1$};
	\node[draw, circle](5) at (6,0) {$z_1^2$};
	\node[draw, circle](6) at (7.5,0) {$d_1$};
	\node[draw, circle](7) at (9,0) {$\bar{z}_1^1$};
	\node[draw, circle](8) at (10.5,0) {$\bar{z}_1^2$};
	\node[draw, circle](9) at (12,0) {$z_2^1$};
	\node[draw, circle](10) at (13.5,0) {$z_2^2$};
	\node[draw, circle](11) at (15,0) {$d_2$};
	\node(12) at (16.5,0) {$\dots$};
	\node[draw, circle](13) at (18,0) {$\bar{z}_s^1$};
	\node[draw, circle](14) at (19.5,0) {$\bar{z}_s^2$};

	\draw[-, >=latex,ultra thick] (2)--(1);	
	\draw[-, >=latex,ultra thick] (2)--(3);	
	\draw[-, >=latex,ultra thick] (3)--(4);	
	\draw[-, >=latex,ultra thick] (4)--(5);	
	\draw[-, >=latex,ultra thick] (6)--(5);	
	\draw[-, >=latex,ultra thick] (6)--(7);	
	\draw[-, >=latex,ultra thick] (8)--(7);	
	\draw[-, >=latex,ultra thick] (8)--(9);	
	\draw[-, >=latex,ultra thick] (10)--(9);	
	\draw[-, >=latex,ultra thick] (10)--(11);		
	\draw[-, >=latex,ultra thick] (12)--(11);		
	\draw[-, >=latex,ultra thick] (12)--(13);		
	\draw[-, >=latex,ultra thick] (14)--(13);				
\end{tikzpicture}
\caption{The graph for Lemma \ref{lemma1} \label{fig1}
}
\end{figure*}

 The set of agents in  ${\cal I}$ is  $N=B\cup P\cup Q\cup R$, where $B=\cup_{j=1}^s\{b_j^1, b_j^2, \bar{b}_j^1, \bar{b}_j^2\}$ are   {\it literal} agents, $P=\{p_1,\dots,p_s\}$, $Q=\{q_1,\dots,q_s\}$ are  {\it variable} agents and  $R=\{r_1,\dots,c_{c(5s+t)-6s+1}\}$ are dummy agents (observe that $c(5s+t)-6s+1>0$ for any $c\ge 1$).  So the  total  number of agents in ${\mathcal I}$ is $n=c(5s+t)+1$.

Let us remark that each literal $\ell\in L$ from  formula $F$ has in  ${\cal I}$ its `corresponding' chore in $Z$ and agent in $B$; they will be denoted by $z(\ell)$ and $b(\ell)$, respectively.
 
For each agent $a\in N$ her disutility is 0 for some specific chores and the total disutility of 1 for  agent $a$ is distributed uniformly among the remaining chores, to achieve normalization. In  more details:

\noindent If $a=b(\ell)\in B$ is a literal agent  then:
$$
u_{b(\ell)}(v) = \left\{
  \begin{array}{ll} 
    0  &\ \ \mbox{\ if \ } v=z(\ell) \mbox{\ or\ }  v=y_{c(\ell)} \\
    1/(5s+t-2)   &\ \ \mbox{\ otherwise}
  \end{array}
\right.
$$
\noindent If $a=p_j\in P$ then:
$$
u_{p_j}(v) = \left\{
  \begin{array}{ll} 
    0  &\ \ \mbox{\ if \ } v\in \{z_j^1, z_j^2, \bar{z}_j^1, \bar{z}_j^2\}\\
    1/(5s+t-4)   &\ \ \mbox{\ otherwise}
  \end{array}
\right.
$$
\noindent If  $a=q_j\in Q$ then:
$$
u_{q_j}(v) = \left\{
  \begin{array}{ll} 
    0  &\ \qquad \mbox{\ if \ } v=d_j\\
    1/(5s+t-1)   &\ \qquad \mbox{\ otherwise}
  \end{array}
\right.
$$
and for each agent $a\in R$ we have $u_a(v)=1/(5s+t)$ for each $v\in V$.

\begin{lemma}\label{lemma1}
If formula $F$ is satisfiable  then ${\mathcal I}$ admits a valid allocation $\pi$ such that $u_a(\pi(a))=0$ for each agent $a\in N$. If $F$ is not satisfiable then for any valid allocation $\pi$ there exists an agent whose bundle has disutility greater than $c/n$. 
\end{lemma}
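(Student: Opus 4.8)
The plan is to prove the two implications separately while treating the additive and maximum-based aggregations at once; the arithmetic fact that makes this possible is that every strictly positive disutility value appearing in $\mathcal I$ is at least $1/(5s+t)$ (the literal agents use $1/(5s+t-2)$, the $p_j$ use $1/(5s+t-4)$, the $q_j$ use $1/(5s+t-1)$, and the dummies use exactly $1/(5s+t)$), whereas $c/n=c/(c(5s+t)+1)<1/(5s+t)$. Hence, under either aggregation, an agent has disutility at most $c/n$ if and only if her bundle contains only chores she values $0$, and then her disutility is exactly $0$. So the lemma reduces to the question of whether a valid allocation exists in which every chore sits on an agent valuing it $0$.

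For the first implication, suppose $f$ satisfies $F$. I would set $\pi(q_j)=\{d_j\}$ for every $j$ and let $p_j$ take the two literal chores on the side made true by $f$: if $x_j$ is true then $\pi(p_j)=\{z_j^1,z_j^2\}$, otherwise $\pi(p_j)=\{\bar z_j^1,\bar z_j^2\}$. The two literal chores on the opposite (false) side are handed to their own literal agents as singletons. This frees precisely the literal agents of the \emph{true} literals; for each clause $c_i$ I pick one $\ell\in L_i$ true under $f$ and set $\pi(b(\ell))=\{y_i\}$, which is legitimate since $u_{b(\ell)}(y_{c(\ell)})=0$ and $c(\ell)=i$. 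All remaining agents (unused literal agents and every dummy) receive $\emptyset$. Distinct clauses use distinct literal occurrences, so no agent is assigned two jobs; each bundle is either a singleton or one of the path-adjacent pairs $\{z_j^1,z_j^2\}$, $\{\bar z_j^1,\bar z_j^2\}$, hence connected; the bundles are disjoint and cover $V$; and every disutility is $0$.

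For the second implication I argue the contrapositive: from a valid $\pi$ with all disutilities at most $c/n$ I recover a satisfying assignment. By the arithmetic remark each agent receives only $0$-valued chores, so every dummy gets $\emptyset$ and all chores are carried by $B\cup P\cup Q$. Since $q_j$ is the only agent valuing $d_j$ at $0$, necessarily $\pi(q_j)=\{d_j\}$; removing $d_j$ splits the $j$-th block, so the connected bundle $\pi(p_j)\subseteq\{z_j^1,z_j^2,\bar z_j^1,\bar z_j^2\}$ lies entirely on the positive side $\{z_j^1,z_j^2\}$ or entirely on the negative side. I declare $x_j$ \emph{true} in the former case and \emph{false} in the latter (arbitrarily if $\pi(p_j)=\emptyset$). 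Each $y_i$ must go to a $0$-valued agent, and the only such agents are the literal agents $b(\ell)$ with $\ell\in L_i$; I fix the one receiving $y_i$ and call its literal $\ell_i\in L_i$.

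The heart of the argument, and the step I expect to be the main obstacle, is showing each $\ell_i$ is true. The zero set of $b(\ell_i)$ is $\{z(\ell_i),y_i\}$, and on the path all clause chores precede all literal chores with the single crossing edge $y_t$--$z_1^1$; thus $z(\ell_i)$ and $y_i$ are non-adjacent, so the connected bundle $\pi(b(\ell_i))$ containing $y_i$ cannot also contain $z(\ell_i)$. Hence $z(\ell_i)$ is covered by the only other $0$-valuing agent, $p_{\mathrm{var}(\ell_i)}$, which pins that $p$-agent to $\ell_i$'s side and so makes $\ell_i$ true; every clause then has a true literal and $F$ is satisfiable. The delicate point is precisely the boundary edge $y_t$--$z_1^1$: if $c(x_1^1)=t$ then $b(x_1^1)$ could legitimately carry both $z_1^1$ and $y_t$ and the pinning would fail. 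I would neutralise this by relabelling the variables and clauses so that the positive literal sitting at $z_1^1$ does not occur in clause $c_t$ (possible whenever $t\ge 3$, the remaining cases being trivial), guaranteeing that no literal agent ever has its two $0$-chores path-adjacent.
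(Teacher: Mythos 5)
Your proof is correct and follows essentially the same route as the paper's: the same arithmetic observation that any positive disutility exceeds $c/n$, the same forward construction from a satisfying assignment, and the same backward argument that pins $d_j$ to $q_j$, each $y_i$ to a literal agent of $c_i$, and uses the disconnection of $\pi(p_j)$ across $d_j$ to force consistency of the extracted truth assignment. The one substantive difference is to your credit: the paper's proof tacitly assumes a literal agent holding a clause chore cannot also hold her literal chore, which fails exactly at the boundary edge between $y_t$ and $z_1^1$ when $c(x_1^1)=t$; you isolate this case and dispose of it by relabelling clauses so that the clause of $x_1^1$ is not $c_t$ (always possible for $t\ge 2$), which is a legitimate and needed repair that the published argument omits.
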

\begin{proof}
Assume first that $f$  is a truth assignment of $F$  that satisfies all clauses in $C$. We construct
from $f$ a valid assignment of chores in ${\mathcal I}$ as follows. Assign each $d_j$ to $q_j$. For each variable $x_j$ assign 
agent $p_j$  the bundle $\{z_j^1,z_j^2\}$ and the agents $ \bar{b}_j^1, \bar{b}_j^2$  objects  $\bar{z}_j^1$ and $\bar{z}_j^2$, respectively if $x_j$ is true and 
assign agent $p_j$  the bundle $\{\bar{z}_j^1,\bar{z}_j^2\}$ and the agents $ b_j^1, b_j^2$  chores  $z_j^1$ and $z_j^2$, respectively if $x_j$ is false. Finally, choose the first true literal $\ell$ in each clause $c_i\in C$ and assign the corresponding literal agent $b(\ell)$ chore  $y_i$. Each agent in $R$ is assigned an empty bundle. It is easy to see that each agent receives a bundle whose disutility is 0, everybody receives a connected piece (if any) and that all chores are assigned.
 
Conversely, suppose that there is a valid assignment $\pi$ of chores in ${\mathcal I}$ such that everybody receives a bundle with disutility at most
 $c/n$. As any nonzero disutility is at least $1/(5s+t)>c/n$ this implies that each agent's bundle is either empty of has disutility 0. We now construct a truth assignment $f$ for $F$ as follows. Note first that chore $d_j$ must be assigned to agent $q_j$. Further,  for each $i$, chore $y_i$ must be assigned to some  agent $b(\ell)\in B$ that corresponds to a literal $\ell$ contained in clause $c_i$. Now suppose that for some $j$, agents $b_j^k$ as well as $\bar{b}_j^{k'}$, for $k,k'\in \{1,2\}$ are assigned some clause chores. This means that  chores $z_j^k$ as well as $\bar{z}_j^{k'}$ must both be assigned to agent $p_j$, but as chore $d_j$ is assigned to agent $q_j$ in any such assignment, agent $p_j$ gets a disconnected piece, which is a contradiction. Hence, set variable $x_j$ to be true if bundle $\{z_j^1,z_j^2\}$ is assigned to agent $p_j$ in $\pi$ and set $x_j$ to be  false if bundle $\{\bar{z}_j^1,\bar{z}_j^2\}$ is assigned to agent $p_j$. It is easy to see that this definition of truth values is consistent and makes formula $F$ true.
\end{proof}

\noindent Now we use Lemma \ref{lemma1} in the following results.

\begin{theorem}\label{thm:NPc:line:Prop:Complete}
Let $c\ge 1$ be arbitrary.  If $\Pol \ne \NP$ then there is no polynomial algorithm to decide if an instance of {\sc CCD} admits a valid 
allocation such that $u_i(\pi(i))\le c/n$ for each $i\in N$, even if $G$ is a path. Hence, {\sc Add-Prop-CCD} is \NP-complete.
\end{theorem}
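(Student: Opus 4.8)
The plan is to use Lemma~\ref{lemma1} essentially as a black box: the construction preceding it already produces, from an instance $F$ of {\sc (2,2)-e3-sat}, an instance $\mathcal{I}$ of {\sc CCD} whose underlying graph is the path of Figure~\ref{fig1}. Hence all the combinatorial work is done, and what remains is only to package it as a polynomial-time many-one reduction for the decision problem ``does $\mathcal{I}$ admit a valid allocation $\pi$ with $u_i(\pi(i)) \le c/n$ for every $i \in N$?''.

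First I would check that the construction of $\mathcal{I}$ from $F$ runs in polynomial time for any fixed $c \ge 1$. The number of chores is $m = 5s+t$, the number of agents is $n = c(5s+t)+1$, and every disutility value is one of a constant number of rationals whose denominators depend only on $m$; so the whole instance has size polynomial in $|F|$ (here $c$ is treated as a fixed constant).

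Next I would invoke the two directions of Lemma~\ref{lemma1} to establish correctness. If $F$ is satisfiable, the lemma yields a valid allocation with $u_a(\pi(a)) = 0$ for all $a$; since $c \ge 1$ gives $c/n > 0$, this allocation witnesses a yes-instance. Conversely, the contrapositive of the second half of the lemma states that if some valid allocation keeps every agent's disutility at most $c/n$, then $F$ must be satisfiable. Thus $F$ is satisfiable if and only if $\mathcal{I}$ is a yes-instance, which gives the claimed hardness for every fixed $c \ge 1$, even on paths.

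Finally, for the ``Hence'' clause I would specialise to $c = 1$, so that the threshold $c/n = 1/n$ becomes exactly the proportionality condition for chores; the reduction then shows {\sc Add-Prop-CCD} is \NP-hard on paths, and since membership in \NP\ was already noted (a valid proportional allocation is checkable in polynomial time), {\sc Add-Prop-CCD} is \NPc. I do not expect any genuine obstacle: the difficulty is entirely absorbed into the proof of Lemma~\ref{lemma1}. The one point worth stating carefully is that the threshold $c/n$ lies strictly below the smallest positive disutility $1/(5s+t)$ any agent can incur, since $n = c(5s+t)+1$ forces $c/n < 1/(5s+t)$; this is precisely what makes the constraint $u_i(\pi(i)) \le c/n$ collapse to $u_i(\pi(i)) = 0$, which is the hook the lemma relies on.
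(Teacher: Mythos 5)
Your proposal is correct and follows essentially the same route as the paper: both reduce via the construction preceding Lemma~\ref{lemma1}, observe that $n=c(5s+t)+1$ forces the threshold $c/n$ strictly below the smallest positive disutility $1/(5s+t)$ so that the bound collapses to zero disutility, and then invoke the two directions of the lemma. Your additional remarks on polynomial-time constructibility and \NP{} membership are routine and consistent with what the paper states elsewhere.
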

\begin{proof}
As the number of agents in ${\mathcal I}$ is $n=c(5s+t)+1$, the  disutility of each agent in a $c$-proportional allocation should be  at most $1/(c(5s+t)+1)$. However, each chore with a positive disutility brings  each agent at least $1/(5s+t)$, so  the result is directly implied by  Lemma \ref{lemma1}.
\end{proof}

As $m<n$ in ${\mathcal I}$, at least one agent gets nothing. Hence in any envy-free or equitable allocation  everybody has to get a bundle with disutility 0. So Lemma \ref{lemma1} directly implies also the following assertions. 

\begin{theorem}\label{thm:NPc:line:EF:CCD}
{\sc Add-EF-CCD} and {\sc Add-EQ-CCD}  are \NP-complete if $G$ is a path. 
\end{theorem}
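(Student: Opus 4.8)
The plan is to observe that the hardness of both problems is an almost immediate corollary of Lemma~\ref{lemma1}, once we exploit the fact that the constructed instance $\mathcal{I}$ has strictly fewer chores than agents. Since membership in \NP\ was already argued in Section~\ref{sec:model}, it suffices to establish \NP-hardness, and I would do this via the same reduction from {\sc (2,2)-e3-sat} that underlies Lemma~\ref{lemma1}, fixing $c=1$ (any $c\ge 1$ works) so that $\mathcal{I}$ has polynomial size.

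The key combinatorial observation is that $m=5s+t < c(5s+t)+1 = n$, so in every valid allocation at least one agent receives the empty bundle, whose additive disutility is $0$. First I would argue that this forces every agent's bundle to have disutility $0$ in any fair allocation. For envy-freeness, if agent $k$ has $\pi(k)=\emptyset$ then $u_i(\pi(k))=0$ for every $i$, so the envy-freeness inequality $u_i(\pi(i))\le u_i(\pi(k))$ together with non-negativity of disutilities yields $u_i(\pi(i))=0$ for all $i$. For equitability, the empty-bundle agent has disutility $0$, and $u_i(\pi(i))=u_k(\pi(k))=0$ for all $i$ follows directly from the equitability condition. Conversely, an allocation in which every agent has disutility $0$ is trivially both envy-free (since $0\le u_i(\pi(j))$ always) and equitable (all values equal $0$). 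Hence, for this instance, an envy-free allocation exists if and only if an equitable allocation exists if and only if a valid allocation giving everyone disutility $0$ exists.

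Finally I would invoke Lemma~\ref{lemma1} to close the reduction. If $F$ is satisfiable, the lemma produces a valid allocation assigning disutility $0$ to every agent, which by the previous paragraph is both envy-free and equitable. If $F$ is unsatisfiable, the lemma guarantees that every valid allocation leaves some agent with disutility greater than $c/n>0$, so no disutility-$0$ allocation exists, and therefore neither an envy-free nor an equitable one does. This yields the chain of equivalences ``$F$ satisfiable $\iff$ $\mathcal{I}$ admits an envy-free allocation $\iff$ $\mathcal{I}$ admits an equitable allocation,'' which completes the reduction for both {\sc Add-EF-CCD} and {\sc Add-EQ-CCD}.

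Since the whole argument is a direct corollary of Lemma~\ref{lemma1}, I do not anticipate any serious obstacle; the only point requiring care is the bookkeeping that the forced empty bundle indeed drives the disutility of \emph{all} agents to $0$ under each of the two fairness notions, together with the straightforward reverse direction that a disutility-$0$ allocation satisfies both notions. This is exactly where the asymmetry $m<n$ (which would fail in the goods setting) does the real work.
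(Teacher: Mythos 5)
Your argument is correct and is essentially identical to the paper's own (very brief) proof: the paper likewise notes that $m<n$ forces some agent to receive an empty bundle, hence every agent must have disutility $0$ in any envy-free or equitable allocation, and then invokes Lemma~\ref{lemma1}. Your write-up merely spells out the bookkeeping that the paper leaves implicit.
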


For the maximum-based disutility extension, let us change the construction in the beginning of this section slightly. Namely, each positive disutility of an item will be set to 1. The same arguments as above are still  valid, so we get the following assertion.

\begin{theorem}\label{thm:NPc:line:max}
 {\sc Max-EF-CCD} and {\sc Max-EQ-CCD}  are  \NP-complete if $G$ is a path, even in the binary case. 
\end{theorem}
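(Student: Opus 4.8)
The plan is to reuse the path construction from the beginning of this section verbatim, changing only the disutility values, exactly as announced: every entry that was a positive normalized fraction now becomes $1$, while the $0$-chores of each agent stay unchanged. First I would observe that the combinatorial content of Lemma~\ref{lemma1} survives this change. With the maximum-based aggregation a bundle has disutility $0$ precisely when it contains only $0$-chores of its owner, and disutility $1$ otherwise. Thus the forward construction of Lemma~\ref{lemma1}, which hands every agent a set of its own $0$-chores (the dummy agents in $R$ receiving the empty bundle), still produces a valid allocation in which every agent has maximum-based disutility $0$. Conversely, the ``only if'' argument of Lemma~\ref{lemma1} uses nothing about the actual fractional magnitudes: it invokes only the fact that each agent's bundle is either empty or consists exclusively of its $0$-chores, from which a satisfying truth assignment for $F$ is extracted via the connectivity obstruction at $d_j$ and the placement of each $y_i$ on a true literal. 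Hence, just as before, $F$ is satisfiable if and only if $\mathcal{I}$ admits a valid allocation with $u_a^{max}(\pi(a))=0$ for every agent $a$.

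It then remains to bridge this all-zero condition to envy-freeness and to equitability. For this it suffices to note that $n=c(5s+t)+1>5s+t=m$ for every $c\ge 1$, so the $m$ chores cannot reach all $n$ agents and at least one agent $k$ receives the empty bundle, whose maximum-based disutility is $u_k^{max}(\emptyset)=0$. In an envy-free allocation every agent $i$ must satisfy $u_i^{max}(\pi(i))\le u_i^{max}(\pi(k))=u_i^{max}(\emptyset)=0$, which forces $u_i^{max}(\pi(i))=0$ for all $i$; in an equitable allocation every agent's disutility must equal that of $k$, again $0$. Conversely, any valid all-zero allocation is trivially envy-free (disutilities are non-negative, so $0\le u_i^{max}(\pi(j))$) and trivially equitable. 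Therefore a valid envy-free (respectively equitable) allocation exists if and only if a valid all-zero allocation exists, which by the previous paragraph holds if and only if $F$ is satisfiable.

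Combining the two equivalences yields a polynomial-time many-one reduction from {\sc (2,2)-e3-sat} to each of {\sc Max-EF-CCD} and {\sc Max-EQ-CCD} restricted to paths with binary disutilities; membership in \NP\ was already established in Section~\ref{sec:model}. The step I expect to require the most care is the faithful transfer of the converse direction of Lemma~\ref{lemma1}: one must check that the argument forbidding $z_j^k$ and $\bar z_j^{k'}$ from being split off from $p_j$, together with the argument that each $y_i$ lands on a literal agent corresponding to a true literal, goes through unchanged once ``disutility at most $c/n$'' is replaced by ``maximum-based disutility $0$''. The convention $u^{max}(\emptyset)=0$ should also be stated explicitly, since the entire bridge to envy-freeness and equitability rests on the empty-bundle agent witnessing the target disutility $0$.
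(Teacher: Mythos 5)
Your proposal is correct and follows essentially the same route as the paper: the paper likewise reuses the path construction with every positive disutility replaced by $1$, invokes Lemma~\ref{lemma1} (whose converse indeed only uses that each bundle is empty or consists of the owner's $0$-chores), and uses $m<n$ to force disutility $0$ on everyone under envy-freeness or equitability. Your write-up merely makes explicit the details the paper leaves implicit, including the convention $u^{max}(\emptyset)=0$.
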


\section{Stars}\label{sec:stars}

Let $c$ denote the center of the star. As each agent has to get a connected bundle, only the agent that is assigned $c$ can get more than one chore. According to the fairness criterion used, there are  necessary conditions that each agent has to fulfill, so as to be entitled to be assigned $c$; we shall call such an agent {\it central}. If there is no  agent who fulfills these conditions then there is no valid allocation with the desired properties. If there exists such an agent, we still have to decide about the assignment of the leaves to the other agents. For this graph topology we first present efficient algorithms and then proceed  to hard cases.

\subsection{Easy cases}

All the easy cases use a similar idea, borrowed from \cite{BouveretCechl2017}. First we guess a central agent $c$. This agent gets as many leaves as possible, and the assignment of the other leaves to other agents is found by employing an efficient matching algorithm in bipartite graphs. 

We can also perform a similar complexity analysis for all the algorithms. The upper-bound on the number of steps depends on the matching algorithm used. The bipartite graph constructed in the algorithm has $O(m+n)$ vertices and $O(mn)$ edges. The Hopcroft-Karp matching algorithm applied to a graph with $p$ vertices and $q$ edges runs in $O(q\sqrt{p})$ steps, which in our case thus means $O(mn\sqrt{m+n})$ steps. Moreover, we might need to repeat the procedure for each agent, and that leads an overall complexity of $O(mn^2\sqrt{m+n})$.

\begin{theorem}\label{thm:poly:stars:proportional}
{\sc Add-Prop-CCD} is solvable in polynomial time if $G$ is a star.
\end{theorem}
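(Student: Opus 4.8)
The plan is to exploit the rigid structure of connected bundles in a star. Since any connected subgraph of a star containing two or more vertices must contain the center $c$, in every valid allocation exactly one agent---the \emph{central} agent---receives $c$ together with an arbitrary subset of leaves, while every other agent receives either a single leaf or the empty bundle. First I would guess the central agent by trying each $a\in N$ in turn; for a fixed $a$ the task reduces to deciding whether the leaves can be distributed so that $a$ and all the remaining agents are proportional.

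The key observation for a fixed central agent $a$ is a normalization identity. If $a$ keeps $c$ together with the leaves that are \emph{not} handed to the other agents, then since $u_a^{add}(V)=1$ the disutility of $a$ equals $1-\sum_v u_a(v)$, where the sum runs over the leaves ``offloaded'' onto the non-central agents. Hence $a$ is proportional, i.e. $u_a(\pi(a))\le 1/n$, if and only if the total $u_a$-disutility of the offloaded leaves is at least $(n-1)/n$. A non-central agent $i$ can be proportional only by receiving a single leaf $v$ with $u_i(v)\le 1/n$ (or nothing), so an admissible hand-out is exactly a matching. I would therefore build the bipartite graph $H_a$ between the leaves and $N\setminus\{a\}$, placing an edge between leaf $v$ and agent $i$ precisely when $u_i(v)\le 1/n$, assign edge $(v,i)$ the weight $u_a(v)$, and compute a \emph{maximum-weight matching}; all unmatched leaves are then given to $a$. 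The guess $a$ yields a proportional allocation iff this maximum weight is at least $(n-1)/n$.

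Correctness would follow in both directions. For soundness, every matched pair satisfies $u_i(v)\le 1/n$, unmatched non-central agents have disutility $0$, and the weight test guarantees $u_a(\pi(a))\le 1/n$; connectivity, completeness and disjointness are immediate from the construction. For completeness, from any proportional allocation I would take its unique central agent $a^\star$ (the holder of $c$): the leaves held by the non-central agents form a matching in $H_{a^\star}$, because each such agent is proportional and hence holds an admissible leaf, and by the identity above its weight equals $1-u_{a^\star}(\pi(a^\star))\ge (n-1)/n$, so the maximum-weight matching for the guess $a^\star$ passes the test. A polynomial maximum-weight bipartite matching (equivalently a min-cost flow) on a graph with $O(m+n)$ vertices and $O(mn)$ edges, repeated over the $n$ guesses, gives an overall polynomial algorithm.

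I expect the main obstacle to be spotting the normalization identity that converts the central agent's budget constraint---a bound on the leaves she \emph{keeps}---into a lower bound on the weight of the leaves she \emph{gives away}; this is what linearizes the interaction between $a$'s budget and the matching and lets a single weighted matching decide feasibility. A secondary point to get right is that, unlike the goods case, we never need to cover the non-central agents (empty bundles are harmless for chores), so the offloading is an ordinary matching rather than one that must saturate the agent side.
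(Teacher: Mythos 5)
Your proposal is correct and is essentially the paper's own proof: the same guess of the central agent, the same bipartite graph between $N\setminus\{a\}$ and the leaves with edges when $u_i(v)\le 1/n$ and weights $u_a(v)$, and the same acceptance test that the maximum-weight matching reach $(n-1)/n$ (your normalization identity is exactly why that threshold works, and it also subsumes the paper's preliminary check $u_a(c)\le 1/n$). Nothing further is needed.
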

\begin{proof}
  First, a central agent $i$ must fulfill $u_i(c)\le 1/n$. Let us
  check for each such agent $i$ whether there is a proportional valid
  allocation $\pi$ assigning $c$ to $i$.

  To this end, we create a bipartite graph $H=(Z,Z',L)$ with
  $Z=N\setminus \{i\}$, $Z' = V\setminus\{c\}$ and $\{j,v\}\in L$ if
  and only if $u_j(v)\le 1/n$; the weight of this edge is $u_{i}(v)$.

  We find a maximum weight matching  $M$ in $H$.  A proportional valid allocation exists if and only if the   weight of $M$ is at least $(n-1)/n$. If this is the case, assign the
  objects to agents in $Z$ according to $M$ and all the unmatched   leaves plus the central vertex to $i$.
\end{proof}

Now let us deal with a possibility that  each agent gets a bundle with disutility 0.

\begin{prop}\label{thm:stars:binary}
If $G$ is a star then the problem to decide the existence of a valid allocation where each agent gets a bundle with disutility 0 can be solved in polynomial time.
\end{prop}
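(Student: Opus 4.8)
The plan is to reduce the problem to a collection of bipartite matching instances, in the spirit of the scheme for stars sketched at the beginning of this subsection. First I would isolate the structure forced by a star: a connected bundle is either a single leaf, the center $c$ alone, or $c$ together with an arbitrary set of leaves. Consequently exactly one agent --- the \emph{central} agent --- may receive $c$ (possibly with some leaves), while every other agent receives at most one leaf. Moreover, for both the additive and the maximum-based aggregation we have $u_i(\pi(i))=0$ if and only if $u_i(v)=0$ for every $v\in\pi(i)$ (the empty bundle having disutility $0$ by convention). Hence requiring disutility $0$ for everybody is equivalent to forbidding any agent from receiving a chore for which her disutility is strictly positive; for a chore $v$, call an agent $j$ \emph{eligible} for $v$ if $u_j(v)=0$.

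Next I would guess the central agent. For each agent $i$ with $u_i(c)=0$ I test whether a valid zero allocation assigning $c$ to $i$ exists. The central agent is given $c$ together with all leaves $v$ for which $u_i(v)=0$; let $T$ denote the set of remaining leaves, i.e.\ those with $u_i(v)>0$. Each leaf of $T$ must then be assigned, one per agent, to some agent in $N\setminus\{i\}$ eligible for it. I would therefore build the bipartite graph $H=(N\setminus\{i\},\,T,\,L)$ with $\{j,v\}\in L$ iff $u_j(v)=0$, compute a maximum matching, and declare success for this guess precisely when the matching saturates $T$. The overall answer is ``yes'' iff some guess succeeds.

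The only step that is not completely routine is the exchange argument justifying the greedy choice, and this is the main obstacle I would need to handle carefully. If some zero allocation assigns $c$ to $i$, then $i$'s bundle is $\{c\}\cup S$ with $S$ a set of leaves eligible for $i$, and the leaves outside $\{c\}\cup S$ are covered by an injective assignment to the other agents. Enlarging $i$'s share to \emph{all} leaves eligible for $i$ can only shrink the set of leaves that must be matched elsewhere, and restricting the original assignment to this smaller set still covers it; so giving $i$ all her eligible leaves is without loss of generality, and a saturating matching in $H$ exists iff a zero allocation with $i$ central exists. Conversely, any saturating matching yields a valid zero allocation: the central bundle is connected and has disutility $0$, each matched agent receives a single eligible leaf, and every vertex is covered. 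For the running time, each of the at most $n$ guesses requires one maximum bipartite matching on a graph with $O(m+n)$ vertices and $O(mn)$ edges, giving the polynomial bound $O(mn^2\sqrt{m+n})$ already derived for the easy cases.
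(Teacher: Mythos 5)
Your proposal is correct and follows essentially the same route as the paper: guess a central agent $i$ with $u_i(c)=0$, give her all chores she values at $0$, and match the remaining chores to the other agents via a bipartite matching that must saturate the chore side. The only difference is that you spell out the exchange argument showing it is without loss of generality to give the central agent \emph{all} of her zero-disutility leaves, a step the paper dismisses with ``clearly''; your justification of it is sound.
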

\begin{proof}
To be able to decide the existence of such an allocation,  let  us first realize that an agent $i$ can be  central  only if $u_i(c)=0$. For such an agent $i$ let $X_i=\{v\in V; u_i(v)=0\}$. Now create the bipartite graph $H=(Z,Z',L)$ where $Z=N\backslash\{i\}$, $Z'=V\backslash X_i$  and  $\{j,v\}\in L$ if $u_j(v)=0$. 
Clearly, a desired allocation exists if and only if $H$ admits a matching $M$ that covers all vertices in $Z'$.
\end{proof}

\begin{theorem}\label{thm:max:stars:EQ}
{\sc Max-EQ-CCD} is solvable in polynomial time if $G$ is a star.
\end{theorem}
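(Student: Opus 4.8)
The plan is to solve \textsc{Max-EQ-CCD} on a star by iterating over all possible common disutility values $w$, and for each $w$ deciding whether there is a valid equitable allocation in which every agent attains disutility exactly $w$. Since disutilities are drawn from the finite set $\{u_i(v) : i \in N, v \in V\}$, which has at most $mn$ distinct values, there are only polynomially many candidate targets $w$ to try. For each fixed $w$, I would guess which agent is the central agent, i.e. the unique agent allowed to receive more than one leaf together with the center $c$; as in the other star results of this section, this guessing contributes only a factor of $n$.

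Fix a target $w$ and a candidate central agent $i$. For $i$ to be central with maximum-based disutility exactly $w$, I need $u_i(c) \le w$, and $i$'s whole bundle (the center plus the leaves assigned to her) must have maximum disutility exactly $w$; in particular $i$ may only receive leaves $v$ with $u_i(v) \le w$, and at least one object in her bundle must have disutility exactly $w$ for her (this last condition can be arranged by checking afterward whether the greedily leftover leaves, which are dumped on $i$, push her value to $w$, or by slightly relaxing to $u_i(\cdot) \le w$ and verifying tightness). Every other agent $j \ne i$ receives exactly one leaf $v$, and equitability at target $w$ forces $u_j(v) = w$. So the core feasibility question is a bipartite assignment: build the graph $H_w = (Z, Z', L_w)$ with $Z = N \setminus \{i\}$, $Z' = V \setminus \{c\}$, and put $\{j,v\} \in L_w$ exactly when $u_j(v) = w$. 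I would seek a matching in $H_w$ that saturates $Z$ (every noncentral agent gets a leaf at value $w$), while the unmatched leaves must all be assignable to $i$, i.e. they must satisfy $u_i(v) \le w$, and are handed to the central agent together with $c$.

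The main verification is then: a valid equitable allocation with common value $w$ and central agent $i$ exists if and only if $H_w$ admits a matching saturating $Z$ whose unmatched leaves all have $u_i$-value at most $w$, and the resulting bundle for $i$ has maximum $u_i$-disutility equal to $w$ (using $u_i(c) \le w$ and the leftover leaves). I expect the cleanest way to enforce the ``unmatched leaves go to $i$'' constraint is to delete from $Z'$ in advance any leaf $v$ with $u_i(v) > w$ and instead require that such a leaf be matched to some $j \ne i$ at value $w$; if any leaf with $u_i(v) > w$ cannot be so matched, this $(w,i)$ pair is infeasible. This reduces everything to testing, for each of the $O(mn)$ values of $w$ and each of the $n$ choices of $i$, the existence of a suitable bipartite matching, computable by Hopcroft--Karp. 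Combined with the matching cost from the discussion preceding Theorem~\ref{thm:poly:stars:proportional}, the total running time remains polynomial.

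The step I expect to be the main obstacle is handling the ``exactly $w$'' requirement for the central agent cleanly, since maximum-based aggregation makes the central bundle's value the maximum over possibly many leftover leaves rather than a single controllable quantity. The subtlety is that after matching the noncentral agents, the leftover leaves are forced onto $i$, and one must ensure both that none of them exceeds $w$ for $i$ and that $i$'s attained maximum is exactly $w$ and not strictly below; I would resolve the latter by observing that if $i$'s bundle has maximum strictly below $w$, one can either declare infeasibility for this pair or, more carefully, note that lowering $w$ for everyone is covered by another iteration of the outer loop, so only the pairs where all agents simultaneously attain the same $w$ need to succeed. Establishing the correctness of this equivalence — that a global equitable allocation at value $w$ decomposes into exactly such a matching plus a valid central bundle — is the crux, and the rest is routine matching theory and a polynomial bound on the number of candidate targets.
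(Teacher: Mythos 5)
Your overall strategy (enumerate the $O(mn)$ candidate common values $w$, guess the central agent $i$, require $u_i(c)\le w$, force every non-central agent to take exactly one leaf at value exactly $w$, and dump the rest on $i$) is the same skeleton as the paper's proof. But the step you yourself flag as the crux --- guaranteeing that the central agent's bundle attains value \emph{exactly} $w$ when $u_i(c)<w$ --- is left genuinely unresolved, and both of the resolutions you sketch are wrong. Declaring the pair $(w,i)$ infeasible whenever the particular matching you found leaves $i$'s maximum strictly below $w$ gives false negatives: a leaf $v$ with $u_i(v)=u_j(v)=w$ may be matched to $j$ in one saturating matching (leaving $i$ below $w$) and left unmatched in another (pushing $i$ up to $w$), so you must search over matchings, not just test one. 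Your fallback --- that the situation is ``covered by another iteration of the outer loop at a lower $w$'' --- is also false: in the allocation at hand the non-central agents are pinned at exactly $w$ while $i$ sits strictly below, so that allocation is simply not equitable for any target, and no other iteration certifies anything about it. The paper closes exactly this hole with a flow gadget: when $u_i(c)<\eta$ it routes $i$'s flow through two auxiliary vertices $r_1$ (to leaves with $u_i(v)=\eta$, capacity $m$) and $r_2$ (to leaves with $u_i(v)<\eta$, capacity $m-n-1$), so that any flow of value $m-1$ necessarily pushes at least one unit through $r_1$, i.e.\ forces at least one leaf of value exactly $\eta$ into $i$'s bundle.

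Two smaller points. First, your mechanism for forcing leaves with $u_i(v)>w$ onto non-central agents is self-contradictory as written (you cannot both delete such a leaf from $Z'$ and require it to be matched); what you actually need is a single matching that simultaneously saturates all of $Z$ and all leaves in $\{v: u_i(v)>w\}$, which is a two-sided saturation requirement and is most cleanly expressed as a flow feasibility problem (again, this is what the paper's network does) rather than a plain Hopcroft--Karp instance. Second, the case $w=0$ needs separate treatment, since there agents may legitimately receive empty bundles and your ``each non-central agent gets exactly one leaf at value $w$'' requirement is too strong; the paper handles this case by a separate matching argument (Proposition~\ref{thm:stars:binary}).
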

\begin{proof}
Proposition \ref{thm:stars:binary} helps in deciding the 
existence of  a valid allocation such that  everybody gets disutility 0.
Now let us deal with the  case when all agents receive bundles with positive disutility equal to ${\eta}$. 

An agent $i$ can be a central agent in an equitable allocation with the common disutility equal to ${\eta}$ only if $u_i(c)\le {\eta}$. Now we proceed differently when this inequality is fulfilled as equation and when it is strict.

If $u_i(c)={\eta}$ we create a flow network  $H=(Z,L)$ as follows. Its vertices are the source $s$, sink $t$ and one vertex for each agent and one vertex for each leaf of $G$. The source is connected to each agent vertex; capacities of arcs $(s,j)$ for $j\ne i$ are 1, capacity of arc $(s,i)$ is $m-n$. There is an arc of capacity 1 between the vertex corresponding to agent $j\ne i$ and the vertex corresponding to leaf $v$ if $u_j(v)={\eta}$ and between $i$ and leaf $v$ if $u_i(v)\le \eta$.
Each leaf vertex is connected to $t$, the capacities of these arcs are also 1. A desired allocation exists if and only if there is a flow of size $m-1$ in this network, namely, the leaves are allocated to agents according to the agent-leaf arcs with nonzero flow. 

If $u_i(c)<{\eta}$ we have moreover to ensure that agent $i$ gets at least one leaf $v$   such that $u_i(v)={\eta}$. The above construction of the flow network will be modified in the following way.  Agent $i$ will not be connected with leaves directly, but there will be two more vertices $r_1,r_2$ and the following arcs:
$(i,r_1)$ with capacity $m$ and arcs $(r_1,v)$ for each leaf $v$ such that $u_i(v)={\eta}$;
arc $(i,r_2)$ with capacity $m-n-1$ and arcs $(r_2,v)$ for each leaf $v$ such that $u_i(v)<{\eta}$.
The construction of the flow network is shown in Figure~\ref{fig:max:star:EQ:flow}.
Again, a desired allocation in which agent $i$ gets at least one leaf with disutility exactly ${\eta}$ exists if and only if there is a flow of size $m-1$ in this network.

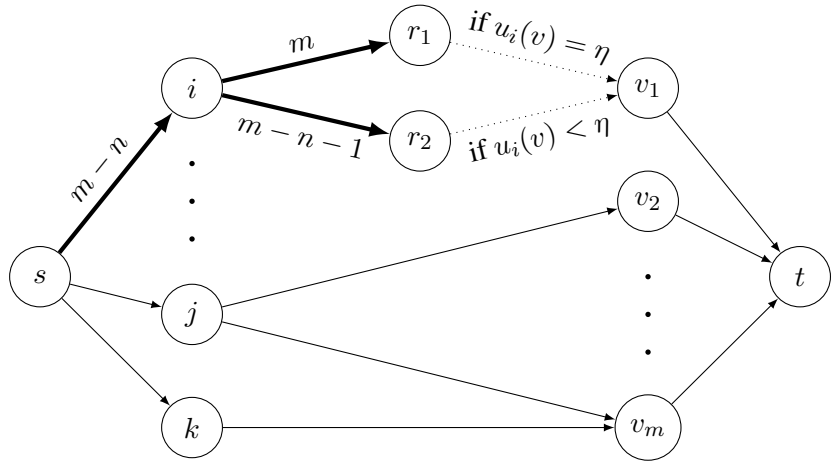
\begin{figure}[htbp]
  \centering
  \begin{tikzpicture}[>=latex]
    \draw (0, 0) node[draw, circle, minimum size=0.8cm] (s) {$s$};    

    \draw (2, 2.5) node[draw, circle, minimum size=0.8cm] (i) {$i$};    
    \draw (2, 1.5) node[fill, circle, minimum size=2pt, inner sep=0pt] {};
    \draw (2, 1) node[fill, circle, minimum size=2pt, inner sep=0pt] {};
    \draw (2, 0.5) node[fill, circle, minimum size=2pt, inner sep=0pt] {};
    \draw (2, -0.5) node[draw, circle, minimum size=0.8cm] (j) {$j$};    
    \draw (2, -2) node[draw, circle, minimum size=0.8cm] (k) {$k$};    

    \draw (5, 3.2) node[draw, circle, minimum size=0.8cm] (r1) {$r_1$};    
    \draw (5, 1.8) node[draw, circle, minimum size=0.8cm] (r2) {$r_2$};    

    \draw (8, 2.5) node[draw, circle, minimum size=0.8cm] (v1) {$v_1$};    
    \draw (8, 1) node[draw, circle, minimum size=0.8cm] (v2) {$v_2$};    
    \draw (8, -1) node[fill, circle, minimum size=2pt, inner sep=0pt] {};
    \draw (8, -0.5) node[fill, circle, minimum size=2pt, inner sep=0pt] {};
    \draw (8, 0) node[fill, circle, minimum size=2pt, inner sep=0pt] {};
    \draw (8, -2) node[draw, circle, minimum size=0.8cm] (vm) {$v_m$};    

    \draw (10, 0) node[draw, circle, minimum size=0.8cm] (t) {$t$};

    \draw[->, ultra thick] (s) -- (i) node[pos=0.5, above, sloped] {$m-n$};
    \draw[->] (s) -- (j);
    \draw[->] (s) -- (k);
    \draw[->, ultra thick] (i) -- (r1) node[pos=0.5, above, sloped] {$m$};
    \draw[->, ultra thick] (i) -- (r2) node[pos=0.5, below, sloped] {$m-n-1$};
    \draw[->, dotted] (r1) -- (v1) node[pos=0.5, above, sloped] {if $u_i(v) = \eta$};
    \draw[->, dotted] (r2) -- (v1) node[pos=0.5, below, sloped] {if $u_i(v) < \eta$};
    \draw[->] (j) -- (v2);
    \draw[->] (j) -- (vm);
    \draw[->] (k) -- (vm);
    \draw[->] (v1) -- (t);
    \draw[->] (v2) -- (t);
    \draw[->] (vm) -- (t);
  \end{tikzpicture}
  \caption{All the thin arcs have capacity equal 1. Capacity of thick arcs is shown next to them. The condition for dotted
arcs to be included is shown next to them.}
\label{fig:max:star:EQ:flow}
\end{figure}
\end{proof}

As we shall see later, the problem of deciding the existence of an envy-free valid allocation in the additive case is in general \NP-complete. However, there is a plausible efficiently solvable special case. We say that {\it agents' preferences are strict on chores} if $u_i(v)\ne u_i(w)$ for any agent $i$ and any pair of distinct vertices $v,w$. 

\begin{theorem}\label{thm:Add:stars:EF:strict}
{\sc Add-EF-CCD} is  solvable in polynomial time if $G$ is a star and the agents' preferences are strict on chores. 
\end{theorem}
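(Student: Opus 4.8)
The plan is to reduce the search to a bounded number of completely determined candidate allocations and to verify envy-freeness of each in polynomial time. First I would split into two cases according to whether some agent receives an empty bundle. If an agent $a$ gets $\pi(a)=\emptyset$ then $u_b(\pi(a))=0$ for every agent $b$, so envy-freeness forces $u_b(\pi(b))=0$ for all $b$; conversely, any valid allocation in which everybody has disutility $0$ is trivially envy-free. Hence the existence of an envy-free allocation with an empty bundle is \emph{exactly} the question answered by Proposition \ref{thm:stars:binary}, which I would invoke directly. It then remains to decide whether there is an envy-free valid allocation in which \emph{every} agent receives a nonempty bundle; this requires $m\ge n$, and since only the agent holding the centre $c$ may own more than one chore, such an allocation must give each of the $n-1$ non-central agents a single leaf and give the central agent $c$ together with the remaining $m-n$ leaves.

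Next I would guess the central agent $i$ (only $n$ choices) and argue that, once $i$ is fixed, the whole allocation is \emph{forced}. Write $\pi(i)=\{c\}\cup S_i$ with $|S_i|=m-n$, and let $T$ be the set of $n-1$ leaves handed to the non-central agents. The crux — and the step I expect to be the main obstacle — is to pin down $S_i$ and $T$. Here strictness and additivity combine nicely: if the allocation is envy-free then the central agent does not envy any single-leaf holder, so $u_i(\pi(i))\le u_i(w)$ for every $w\in T$; on the other hand additivity gives $u_i(v)\le u_i(\pi(i))$ for every $v\in S_i$. Chaining these, $u_i(v)\le u_i(\pi(i))\le u_i(w)$ for all $v\in S_i$ and $w\in T$, and since preferences are strict on chores this becomes the \emph{strict} separation $u_i(v)<u_i(w)$. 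Consequently $S_i$ must be the $m-n$ leaves of smallest disutility for $i$ and $T$ the $n-1$ leaves of largest disutility for $i$. Thus, for each guess of $i$, the central bundle and the set $T$ are uniquely determined and can be read off by sorting the leaves according to $u_i$.

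Finally I would verify the remaining envy-freeness conditions for this unique candidate. Among the non-central agents, envy-freeness means each $j$ must receive a least-disliked leaf of the \emph{fixed} set $T$; by strictness this minimiser is unique, so I would compute, for every non-central $j$, the leaf $\arg\min_{w\in T}u_j(w)$ and check that the resulting map is a bijection onto $T$ (if two agents share the same minimiser, no envy-free assignment on $T$ exists, so the guess fails). It then remains to check the two cross conditions: that the central agent does not envy the others, i.e. $u_i(\pi(i))\le\min_{w\in T}u_i(w)$, and that no non-central agent envies the (large) central bundle, i.e. $u_j(\sigma(j))\le u_j(\pi(i))$ for each $j$, where $\sigma(j)$ denotes $j$'s assigned leaf. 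Each of these tests is a direct inequality or a single distinctness check, so the work per central agent is polynomial and, looping over the $n$ choices of $i$, the whole procedure runs in polynomial time; an envy-free allocation exists iff Proposition \ref{thm:stars:binary} succeeds or some choice of $i$ passes all the checks. The only delicate point is the forced-partition argument, which is precisely where strictness is indispensable: without it the separation is non-strict, ties at the boundary between $S_i$ and $T$ reappear, the partition is no longer determined, and the reduction collapses — consistently with the general \NP-completeness of envy-freeness on stars.
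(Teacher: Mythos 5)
Your proposal is correct and follows essentially the same route as the paper: dispose of the all-zero case via Proposition~\ref{thm:stars:binary}, guess the central agent $i$, observe that strictness and additivity force her bundle to be the centre together with her $m-n$ least-disliked leaves, and then check that the remaining $n-1$ leaves can be distributed without envy. The only real difference is at the last step, and it is cosmetic: where the paper builds a bipartite graph and tests for a perfect matching, you note that strictness makes each non-central agent's only admissible leaf the unique minimiser of her disutility over $T$, so the matching degenerates to a bijectivity check --- which is equivalent, since under strictness the paper's edge set already has maximum degree one on the agent side.
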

\begin{proof}
If  $m<n$ then each agent has to get a bundle with disutility 0, and since preferences are strict, this is only possible if each agent gets at most one vertex, namely the one where she has disutility 0. We can easily  verify whether this happens by matching techniques.

Now let us proceed to the  case when agents receive bundles positive disutility.
Let us in turn check for each agent $i$ whether she can be the central agent. To this end, 
order all the vertices in $V$ according to $i$'s disutility increasingly (we might possibly rename the  the vertices in $G$)
\begin{equation}\label{e_star1}
u_i(v_1)<  \dots < u_i(v_{m-n+1})< u_i(v_{m-n+2})< \dots < u_i(v_m)
\end{equation}
and notice that the central agent will receive exactly $m-n+1$ vertices. The necessary conditions a central agent $i$ must therefore fulfill  are:
\begin{itemize}\itemsep0pt
\item[$(i)$] vertex $c$ must be among the vertices $v_1,v_2,\dots, v_{m-n+1}$ and
\item[$(ii)$]  $\sum_{k=1}^{m-n+1}u_i(v_k)\le u_i(v_{m-n+2})$
\end{itemize} 
These conditions imply that when assigned the bundle $X=\{v_1,v_2,\dots, v_{m-n+1}\}$, agent $i$ will not envy any other agent. We still have to ensure no envy among other agents.

Let us construct the bipartite graph $H=(Z,Z',L)$ with $Z=N\backslash\{i\}$, $Z'=V\backslash X$  and  $\{j,v\}\in L$ if the following inequality is fulfilled:
\begin{equation}\label{cond_add}
u_j(v)\le min\{\{u_j(v'); v'\in Z'\}, u_j^{add}(X)\}.
\end{equation}
Finally, there exists in $\I$ an envy-free assignment  if and only if $H$ admits a perfect matching $M$. 
Namely, agent $i$ is assigned bundle $X$ and the other agents chores according to $M$.
\end{proof}

The proof of the previous theorem can easily be adapted also for the maximum based disutility aggregation.

\begin{theorem}\label{thm:Max:stars:EF}
{\sc Max-EF-CCD} is  solvable in polynomial time if $G$ is a star and the preferences of agents are strict on chores. 
\end{theorem}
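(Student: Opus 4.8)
The plan is to transcribe the proof of Theorem~\ref{thm:Add:stars:EF:strict} almost verbatim, replacing the additive aggregation by the maximum-based one, and to verify that each step survives the change. I would first separate two situations, exactly as the earlier proof does. Either a valid allocation exists in which every agent has disutility $0$ (such an allocation is automatically envy-free, since all disutilities are nonnegative), or no such allocation exists. A bundle has maximum-based disutility $0$ if and only if all its chores are valued $0$, which is precisely the condition of Proposition~\ref{thm:stars:binary}; so I would simply run that polynomial procedure first. If it answers ``yes'', I am done. If it answers ``no'', then any envy-free allocation must have \emph{all} bundles nonempty: an empty bundle $\pi(a)=\emptyset$ forces $u_b(\pi(b))\le u_b(\emptyset)=0$ for every $b$, i.e.\ an all-zero allocation, which we have just excluded. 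Hence in this remaining case every agent receives at least one vertex, so $m\ge n$, and by the star structure the central agent receives exactly $m-n+1$ vertices while each other agent receives exactly one leaf.

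For this remaining case I would loop over each agent $i$ as the candidate central agent and order the vertices by $i$'s increasing disutility as in~\eqref{e_star1}; strictness makes this order total. The key simplification over the additive case is that the central bundle is now \emph{forced}. Indeed, if $i$ is central and envy-free, then for every vertex $v\notin\pi(i)$ (each of which is held by some other agent) we need $u_i^{max}(\pi(i))\le u_i(v)$, i.e.\ $\max_{w\in\pi(i)}u_i(w)\le\min_{v\notin\pi(i)}u_i(v)$; under strict preferences the only set of size $m-n+1$ satisfying this is the prefix $X=\{v_1,\dots,v_{m-n+1}\}$. Thus condition~$(i)$ of the additive proof, $c\in X$, is still necessary (otherwise the forced bundle $X$ cannot contain the center and $i$ cannot be central), and I would also re-derive it directly: if $c=v_\ell$ with $\ell>m-n+1$ then $i$'s bundle must contain $c$ together with at most $m-n$ of the $\ell-1\ge m-n+1$ strictly cheaper vertices, so some cheaper $v_k$ is given away and $u_i(v_k)<u_i(c)\le u_i^{max}(\pi(i))$ makes $i$ envious. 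Condition~$(ii)$, by contrast, becomes automatic: with $X$ the cheapest prefix, $u_i^{max}(X)=u_i(v_{m-n+1})<u_i(v_{m-n+2})\le u_i(v)$ for every $v\in Z'$, so the central agent never envies anyone.

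It then remains only to distribute the leaves of $Z'=V\setminus X$ among $Z=N\setminus\{i\}$ without creating envy. I would build the same bipartite graph $H=(Z,Z',L)$ as in the additive proof, putting $\{j,v\}\in L$ exactly when
\[
u_j(v)\le\min\Bigl\{\min_{v'\in Z'}u_j(v'),\,u_j^{max}(X)\Bigr\},
\]
the sole change being that $u_j^{add}(X)$ is replaced by $u_j^{max}(X)$. A perfect matching then assigns each non-central agent a leaf she values no more than any other leaf (so she envies no other non-central agent) and no more than her view of the central bundle (so she does not envy $i$); conversely, any envy-free allocation with $i$ central yields such a matching, since $\pi(i)=X$ is forced and the singletons held by the other agents exactly cover $Z'$. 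As $|Z|=|Z'|=n-1$, I would answer ``yes'' iff $H$ admits a perfect matching for some candidate $i$; trying all $n$ candidates with Hopcroft--Karp gives the same $O(mn^2\sqrt{m+n})$ bound quoted for the other star algorithms.

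I do not expect a genuine obstacle here: the maximum-based case is in fact \emph{easier} than the additive one, because the central bundle is pinned down exactly and condition~$(ii)$ dissolves entirely. The only point needing a little care is the forcing argument for the central bundle together with the necessity of $c\in X$ under strict preferences; everything else is a mechanical transcription of the proof of Theorem~\ref{thm:Add:stars:EF:strict}.
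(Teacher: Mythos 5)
Your proof is correct and follows essentially the same route as the paper's: the paper likewise reduces to the proof of Theorem~\ref{thm:Add:stars:EF:strict}, observing that only condition~$(i)$ need be checked for a candidate central agent and that $u_j^{add}(X)$ should be replaced by $u_j^{max}(X)$ in the edge condition of the bipartite graph. You merely spell out in more detail why the central bundle is forced to be the cheapest prefix and why condition~$(ii)$ becomes automatic, which the paper leaves implicit.
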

\begin{proof}
When checking whether agent $i$ cen be central, after we permute the vertices of $G$ according to $i$'s disutility increasingly, it is easy to see that it suffices to check just condition $(i)$. Further, in  the construction of the bipartite graph $H$ condition (\ref{cond_add}) should be replaced by
\begin{equation}\label{cond_max}
u_j(v)\le min\{\{u_j(v'); v'\in Z'\}, u_j^{max}(X)\}
\end{equation}
and the rest of the proof follows.
\end{proof}

\subsection{Hard cases} 

It turns out that the only intractable problems when the underlying graph is the star are connected with envy-freeness and equitability  when the disutilities are aggregated additively.

\begin{theorem}\label{thm:Add:stars:EF}
 \textsc{Add-EF-CCD} is
  \NP-complete even if the underlying graph $G$ is a star.
\end{theorem}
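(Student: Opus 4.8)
The plan is to establish membership in \NP\ and then prove hardness by a polynomial reduction. Membership is already recorded in Section~\ref{sec:model}: given an allocation, one checks validity and envy-freeness in $O(mn+n^2)$ time. For hardness I would reduce from {\sc (2,2)-e3-sat}, the same source used in Theorems~\ref{thm_max_EF} and~\ref{thm:NPc:line:max}. The first thing to nail down is the rigid structure that envy-freeness forces on a star. If some agent receives the empty bundle, its disutility $0$ is weakly preferred by everyone, so in any envy-free allocation in which at least one agent has positive disutility, all $n$ bundles must be nonempty. Since on a star only the agent holding the center $c$ can own more than one vertex, this \emph{central} agent necessarily receives $c$ together with exactly $m-n$ leaves, while each of the remaining $n-1$ agents receives a single leaf. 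The whole reduction must therefore be organized around one large bundle (the central agent's) and $n-1$ singletons.

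Given a formula $F$, I would build a star whose center $c$ is accompanied by literal leaves (attached to the literals of $F$), clause leaves, and a block of padding leaves, with the counts chosen so that in the forced structure above the central agent must take $c$ plus, for each variable, the literal leaves of a single polarity. The disutilities of the prospective central agent would be $0$ on $c$ and on the literal leaves, but deliberately \emph{tied}, so that she is free to decide, variable by variable, whether to absorb the positive or the negative literal leaves; this choice is exactly a truth assignment. The peripheral agents then play two roles: variable-agents, whose single-leaf envy constraints against the central bundle are violated unless the central choice is polarity-consistent, and clause-agents, each of which can be given a leaf without envy precisely when the clause it guards contains a literal the central agent made true. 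The remaining leaves are distributed to the leftover agents, and here the analysis of singleton envy (each such agent must receive a globally disutility-minimal leaf among those distributed, a bipartite-matching condition) closes the construction.

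For correctness I would argue the two directions separately. From a satisfying assignment $f$ I would let the central agent take $c$ and the literal leaves dictated by $f$, hand each clause-agent a leaf corresponding to a literal its clause satisfies, and complete the remaining assignment by a matching; no agent envies, because the central bundle has disutility $0$ for the central agent and every peripheral agent receives one of its cheapest available leaves. Conversely, from an envy-free allocation the forced structure yields a central agent whose disutility-$0$ bundle consists of $c$ and $m-n$ literal leaves; the variable-agents' non-envy forces this selection to be polarity-consistent, and the clause-agents' non-envy forces every clause to contain a true literal, giving a satisfying assignment.

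The main obstacle, and the reason the star case is genuinely harder to engineer than the complete-graph case of Theorem~\ref{thm_max_EF}, is that here only a single agent may hold more than one vertex. In the complete-graph reduction each variable agent could be handed a \emph{pair} of literal chores to encode consistency, but on a star this device is unavailable: consistency and clause satisfaction must be enforced solely through the central agent's choice among \emph{tied} disutility-$0$ leaves and through single-leaf envy comparisons of the peripheral agents, both against one another and against the \emph{summed} central bundle. Calibrating the additive disutilities so that all of these inequalities hold simultaneously exactly when the chosen leaves encode a consistent satisfying assignment is the delicate step. It is also precisely where ties are indispensable: under strict preferences the central bundle would be forced to be the $m-n+1$ globally cheapest vertices, removing all choice and collapsing the problem to the polynomial matching algorithm of Theorem~\ref{thm:Add:stars:EF:strict}.
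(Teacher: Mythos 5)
Your overall frame matches the paper's at the top level: membership in \NP, a reduction from \textsc{(2,2)-e3-sat}, and the structural observation that on a star only the holder of the center can own more than one vertex, so an envy-free allocation with all bundles nonempty consists of one central bundle of size $m-n+1$ and $n-1$ singletons. But the core of your reduction --- encoding the truth assignment in the central agent's choice among \emph{tied} disutility-$0$ literal leaves, with polarity consistency enforced by the variable agents' envy comparisons against the \emph{summed} central bundle --- is precisely the step you defer (``calibrating the additive disutilities \dots is the delicate step''), and it is not clear it can be made to work. A peripheral agent $j$'s non-envy toward the central bundle $X$ is a single linear inequality $\sum_{v\in X}u_j(v)\ge u_j(\pi(j))$; such an inequality can force $X$ to contain at least a certain total weight of leaves that $j$ dislikes, but it cannot directly express the disjunctive condition ``$X$ contains all positive literal leaves of $x_j$ or all negative ones.'' Natural calibrations fail: for instance, forcing at least three of $x_j$'s four literal leaves into $X$ does imply one polarity is fully absorbed, but it is violated exactly when both positive occurrences of a true variable are needed to satisfy their two clauses. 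You would need a genuinely new gadget here, and none is given. There are also two secondary holes: nothing rules out the degenerate envy-free allocation in which the central agent takes everything at disutility $0$ and everyone else gets an empty bundle (your nonemptiness argument is conditional on some agent having positive disutility, which your construction never guarantees), and nothing forces \emph{which} agent is central.

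The paper's proof sidesteps all of this by decoupling the central agent from the truth assignment. It adds a dedicated absorber agent $e$ whose disutility is $0$ on everything except a special chore $d$ and one ``tiebreaker'' chore $\tilde y_i$ per variable; $e$ is forced to be central and simply absorbs all leftovers, so her bundle carries no information. The chore $d$ together with an agent $r$ pins down the rest of the structure (in particular each clause agent must then hold a literal chore of her own clause), and the truth value of $x_i$ is encoded by \emph{which} of the two variable agents $p_i,q_i$ receives $\tilde y_i$ (each has disutility $\varepsilon>0$ for it and disutility $0$ for the literal chores of one polarity). Consistency is then enforced by a singleton-versus-singleton envy comparison between a variable agent and a clause agent, never by a comparison against the additive value of the central bundle. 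Adopting this decoupling is the idea your proposal is missing.
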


\begin{proof}
  The reduction is from \textsc{(2,2)-e3-sat}. A boolean formula $F$ has the set of variables $X=\{ x_1, \ldots , x_s\}$ and the set
  of clauses $C=\{ c_1, \ldots , c_t\}$. By $L$ we denote the set of
  literals, i.e.
  $L=\bigcup_{j=1}^s\{ x_j^1, x_j^2, \bar{x}_j^1, \bar{x}_j^2\}$ and
  $c(\ell)$ for $\ell \in L$ denote the clause that contains literal
  $\ell$. Let $L_i$ denote the subset of literals appearing in clause
  $c_i$.

  We construct an instance of \textsc{Add-EF-CCD} with $m=7s+2$ chores
  and $n=2s+t+2$ agents defined as follows. The set of chores is
  $V = Y\cup Z\cup \{ c, d\}$, where $c$ denotes the center of the
  star $G$, $Y=\bigcup_{i=1}^s\{y_i, \bar{y}_i, \tilde{y}_i\}$ are
  variable chores and
  $Z=\bigcup_{j=1}^s\{ z_j^1, z_j^2, \bar{z}_j^1, \bar{z}_j^2\}$ are
  literal chores. Observe that each literal $\ell \in L$ has its
  corresponding chore which will be denoted by $z(\ell)$.

  The set of agents is $N=B \cup P\cup Q\cup \{ e, r\}$, where
  $B=\{ b_1, \ldots, b_t\}$ are clause agents and
  $P=\{ p_1, \ldots, p_s\}$, $Q=\{ q_1, \ldots, q_s\}$ are variable
  agents.
  
  The utility functions are defined as follows.

  If $a=b_i\in B$ is a clause agent then:
  \[
    u_{b_i}(v) = \left\{
      \begin{array}{ll} 
        0  &\ \ \text{\ if \ } v=d \text{\ or\ }  v=z(\ell) \text{ for some } \ell \in L_i \\
        1/(7s-2)   &\ \ \text{\ otherwise.}
      \end{array}
    \right.
  \]

  If $a=p_i\in P$ then:
  \[
    u_{p_i}(v) = \left\{
      \begin{array}{ll} 
        0  &\ \ \text{\ if \ } v=y_i \text{\ or\ }  v=\bar{z}_i^j \text{ for some } j \in \{ 1, 2\} \\
        \varepsilon &\ \ \text{\ if \ } v=\tilde{y}_i \text{\ or\ } v=d\\
        (1-\varepsilon)/(7s-3)   &\ \ \text{\ otherwise,}
      \end{array}
    \right.
  \]
  
  where $\varepsilon$ is such that $\varepsilon < (1-\varepsilon)/(7s-3)$.

  If $a=q_i\in Q$ then:
  \[
    u_{q_i}(v) = \left\{
      \begin{array}{ll} 
        0  &\ \ \text{\ if \ } v=\bar{y}_i \text{\ or\ }  v=z_i^j \text{ for some } j \in \{ 1, 2\} \\
        \varepsilon &\ \ \text{\ if \ } v=\tilde{y}_i \text{\ or\ } v=d\\
        (1-\varepsilon)/(7s-3)   &\ \ \text{\ otherwise.}
      \end{array}
    \right.
  \]
  
  If $a=e$ then:
  \[
    u_{e}(v) = \left\{
      \begin{array}{ll} 
        1/(s+1)  &\ \ \text{\ if \ } v=d \text{\ or\ }  v=\tilde{y}_i \text{ for some } j \in \{ 1, \ldots , s\} \\
        0  &\ \ \text{\ otherwise.}
      \end{array}
    \right.
  \]
  
  If $a=r$ then:
  \[
    u_{r}(v) = \left\{
      \begin{array}{ll} 
        0  &\ \ \text{\ if \ } v=d\\
        1/(7s+1)  &\ \ \text{\ otherwise.}
      \end{array}
    \right.
  \]

  Assume first that $f$ is a truth assignment that satisfies all
  clauses in $C$. We construct from $f$ a valid assignment of chores
  as follows. For each variable $x_i$, if $x_i$ is true, assign
  $\tilde{y}_i$ and $\bar{y}_i$ to $p_i$ and $q_i$ respectively, and
  if $x_i$ is false, assign $y_i$ and $\tilde{y}_i$ to $p_i$ and
  $q_i$. Furthermore, for each clause $c_i$ pick literal
  $\ell \in L_i$ which is true and assign $z(\ell)$ to $b_i$. Finally,
  assign $d$ to $r$, and the remaining chores to $e$. It is easy to
  check that this allocation is envy-free.

  Conversely, suppose that there is a valid assignment $\pi$ of chores
  such that no agent envies another one. Only the central agent can
  receive strictly more than one chore, and, as a consequence, this
  central agent must receive at least $5s-t+1$ chores (otherwise some
  chores will be left unassigned). Suppose that the central agent is
  not $e$. Then this agent will receive at least two chores with the
  highest disutility, and hence will envy any other agent receiving just
  one chore. Therefore, only $e$ can be the central agent.

  Suppose that $e$ receives a non-zero disutility. Then she will envy
  any other agent receiving either one chore other than $d$ or one
  $\tilde{y}_i$ or no chore at all. It thus means that $d$ and each
  chore $\tilde{y}_i$ must be assigned to some other agents.

  Since $\tilde{y}_i$ provides a non-zero disutility to every agent,
 no agent can receive an empty bundle; otherwise each agent that
  receives chore $\tilde{y}_i$ will be envious.

  If $d$ is not assigned to $r$ then $r$ will envy the agent who
  receives it. Hence, $d$ is assigned to $r$ in $\pi$. This in turn
  implies that each agent $b_i$ should receive chore $z(\ell)$ for
  some $\ell \in L_i$. This also implies that each variable agent
  should receive a chore for which she has disutility 0 or
  $\varepsilon$, which means that chore $\tilde{y}_i$ is either
  assigned to agent $p_i$ or $q_i$.

  We now construct truth assignment $f$ as follows. If $\tilde{y}_i$
  is assigned to $p_i$ in $\pi$ then set $x_i$ to be true, and
  otherwise (i.e. $\tilde{y}_i$ is assigned to $q_i$) set $x_i$ to be
  false. We will show that $f$ satisfies each clause $c_i$. Let $\ell$
  be the literal of $c_i$ such that $z(\ell)$ is assigned to
  $b_i$. Assume that $\ell$ is a positive literal of variable $x_j$
  (the negative case can be treated in a similar way). If
  $\tilde{y}_j$ is assigned to agent $q_i$ then she will envy agent
  $b_i$, leading to a contradiction. Therefore, $\tilde{y}_i$ is
  assigned to agent $p_i$, $x_i$ is set to true and clause $c_i$ is
  true.
\end{proof}

The equitability criterion also leads to an \NP-complete problem.

\begin{theorem}\label{thm:NPc:star:EQ}
{\sc Add-EQ-CCD} is \NP-complete if $G$ is a star. 
\end{theorem}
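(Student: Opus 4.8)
The plan is to prove membership in \NP\ and \NP-hardness separately. Membership is immediate: as already observed for all problems in this paper, given an allocation one can check validity and the equitability condition $u_i(\pi(i))=u_j(\pi(j))$ in polynomial time. For hardness I would reduce from \textsc{(2,2)-e3-sat}, reusing the overall architecture of the reduction in the proof of Theorem~\ref{thm:Add:stars:EF}. The star would again carry a center $c$, clause chores, literal chores $z(\ell)$, paired variable chores, and a distinguished chore $d$; the agent set would again consist of clause agents $b_i$, paired variable agents $p_j,q_j$, a guard agent $r$ with disutility $0$ only on $d$, and a special agent $e$ that has disutility $0$ on almost all chores and is intended to play the role of the central agent.

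The structural backbone is the same as in the envy-free case: on a star only the central agent can hold more than one chore, so each of the other $n-1$ agents receives exactly one leaf and the central agent receives the remaining $m-n+1$ chores. The decisive difference is that equitability is an equality rather than an inequality, so I cannot simply force a disutility-$0$ allocation: by Proposition~\ref{thm:stars:binary}, deciding the existence of such an allocation on a star is polynomial. Hence the reduction must force the common value $\eta$ to be a fixed \emph{positive} number. I would therefore calibrate the disutilities (using a parameter $\varepsilon$, as in Theorem~\ref{thm:Add:stars:EF}) so that: (i) $e$ is the only agent whose enforced pile of $m-n+1$ chores can sum to exactly $\eta$, which pins $e$ as the central agent and forces $d$ and each selector chore $\tilde y_j$ out of $e$'s bundle; and (ii) every non-central agent can attain disutility \emph{exactly} $\eta$ only by receiving one designated chore, so that $d$ must go to $r$, each clause agent $b_i$ must receive a literal chore $z(\ell)$ with $\ell\in L_i$, and for each variable the selector chore is forced onto exactly one of $p_j,q_j$, thereby encoding a truth value.

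Granting this calibration, the two directions follow as in the envy-free proof: from a satisfying assignment I build the allocation by sending $d$ to $r$, a true-literal chore to each $b_i$, the selector chores to the appropriate variable agents, and the zero-disutility remainder to $e$, so that every agent reaches disutility $\eta$; conversely the forcing arguments (i)--(ii) show that any equitable allocation induces a satisfying assignment. The main obstacle is the calibration itself: because equitability is two-sided, each value must be chosen so that the \emph{only} way for all agents to attain the identical value $\eta$ is the intended one, with deviations ruled out both from above and from below, which is more delicate than the one-sided blocking used for envy-freeness. Verifying that $e$'s enforced bundle sums to exactly $\eta$ while no rival agent's can, and that no agent is left with an empty bundle (which would collapse $\eta$ to $0$), is the technical heart of the construction.
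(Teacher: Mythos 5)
Your NP-membership argument and the structural observations (only the central agent can hold more than one leaf; the all-zero case must be excluded because of Proposition~\ref{thm:stars:binary}, so the common value $\eta$ must be forced to be positive) are fine, but the hardness reduction has a genuine gap, and it is not merely that the ``calibration'' is deferred. The envy-free reduction of Theorem~\ref{thm:Add:stars:EF} enforces the crucial link between clause satisfaction and truth values through \emph{envy}: if $b_i$ holds $z(\ell)$ for a positive literal $\ell$ of $x_j$ while the selector chore $\tilde y_j$ went to $q_j$, then $q_j$ (who values $z_j^1,z_j^2$ at $0$ but holds a chore of value $\varepsilon$) envies $b_i$. Under equitability there is no such cross-agent comparison: once you calibrate the disutilities so that every agent's designated chores are worth exactly $\eta$ to her, the ``inconsistent'' allocation in which $q_j$ gets $\tilde y_j$ and $b_i$ gets $z_j^k$ gives both agents disutility exactly $\eta$ and violates nothing. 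Your steps (i)--(ii) only pin down the \emph{shape} of the allocation (who is central, each non-central agent gets one designated chore); they do not rule out choosing, for each clause, a literal chore whose variable was set the wrong way. So the reverse direction of your reduction fails as described, and fixing it would require a genuinely new mechanism rather than a tuning of $\varepsilon$.

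The paper takes a different and, for this criterion, more natural route: it reduces from a variant of \textsc{Partition}. On a star with additive disutilities, equitability with common value $\eta$ forces the central agent's bundle to \emph{sum to exactly} $\eta$, which is precisely a subset-sum constraint; the instance is built so that $\eta$ must equal $1/3$, the central agent must be a designated agent $j_0$, each remaining agent $j_i$ takes exactly one of the pair $\{v_i,w_i\}$, and $j_0$'s leftover pile sums to $1/3$ if and only if the chosen pair-elements realize the target $K$. If you want to salvage a SAT-based reduction you would have to encode the clause--variable consistency into an exact-sum condition on the central agent's bundle, which in effect re-derives a subset-sum gadget anyway.
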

\begin{proof}
We shall provide a polynomial reduction from the following version of the \NP-complete problem {\sc partition} \citep{Garey79}; symbol $[p]$ denotes the set $\{1,2,\dots,p\}$.
\begin{quotation}
\noindent {\bf Instance ${\cal J} $:} The set $\{a_i,b_i; i \in [p]\}$ of integers such that $\sum_{i\in[p]}(a_i+b_i)=2K$.\\
{\bf Question:} Does there exist a partition $(P,P')$ of $[p]$ such that  $\sum_{i\in P}a_i+\sum_{i\in P'}b_i=K$?\\
\end{quotation}
Let us construct an instance ${\cal I}$ of {\sc Add-EQ-CCD} as follows. The set of chores is $V=\{c,d\}\cup V'$, where $V'=$ $\{v_i,w_i; i \in [p]\}$. Chore $c$ is the center of the star, the other chores are its leaves. The set of agents is $N=\{j_0,j_{p+1}\}\cup N'$, where $N'=\{j_i; i\in [p]\}$. The disutilities of agents are as follows and it can be easily checked that they are all normalized.
$$u_{j_0}(v)=\left\{\begin{array}{ll}
1/6 & \text{\ for\ } v=c\\
1/2  & \text{\ for\ } v=d\\
a_i/(6K) & \text{\ for\ } v=v_i; i\in[p]\\
b_i/(6K) & \text{\ for\ } v=w_i; i\in[p]
\end{array}\right.
$$
$$u_{j_{p+1}}(v)=\left\{\begin{array}{ll}
1/3 & \text{\ for\ } v=d\qquad\quad \\
2/(6p+3)  & \text{\ otherwise}
\end{array}\right.
$$
and, finally,  for $i\in[p]$
$$u_{j_i}(v)=\left\{\begin{array}{ll}
1/3 & \text{\ for\ } v\in\{c,v_i,w_i\}\quad\\
0 & \text{\ otherwise}
\end{array}\right.
$$

Now suppose that $(P,P')$ is a partition witnessing that  ${\cal J}$ is a yes instance; let us define a valid assignment in the following way.
$\pi(j_0)=\{c\}\cup \{v_i, i\in P\}\cup \{w_i, i\in P'\}$ and $\pi(j_{p+1})=\{d\}$.
Further,
$\pi(j_i)=\{v_i\}$ if $i\in P'$ and $\pi(j_i)=\{w_i\}$ if $i\in P$. It is easy to see that $\pi$ is a  valid assignment and for each agent the disutility of her piece is $1/3$.

Conversely, suppose that ${\cal I}$ admits a valid equitable  assignment such that each agent  gets the same disutility equal to ${\eta}$. First, let us realize that ${\eta}\ne 0$, as the agent who receives the central vertex $c$ has a positive disutility.  Further, ${\eta}$ cannot be greater than $1/3$. Namely,  as exactly one agent  can be assigned the central vertex $c$ (and hence more than one chore), we would not be able to give  to each agent in $N'$ a piece with disutility greater than $1/3$. This also immediately implies that the only agent who can receive  chore $c$ is agent $j_0$ and that ${\eta}=1/3$. So agent $j_{p+1}$ receives chore $d$ and each agent in $N'$ receives either chore $v_i$ or chore $w_i$. This means that $\pi(j_0)=\{c\}\cup \{v_i; i\in P\} \cup \{w_i; i\in P'\}$ where $(P,P')$ is a partition of $[p]$; moreover, as disutility $j_0$ derived from $\pi(j_0)$ is equal to $1/3$, we must have
$$\sum_{i\in P}a_i/(6K)+\sum_{i\in P'}b_i/(6K)=1/6$$
which implies
$$\sum_{i\in P}a_i+\sum_{i\in P'}b_i=K$$
and hence $(P,P')$ is a partition of $[p]$ verifying that ${\cal J}$ is a yes instance of {\sc partition}. 
\end{proof}

\section{Conclusion and open problems}\label{sec:conclusion}

In this paper we studied the computational complexity of the problem of finding a fair allocation of nondisposable undesirable items (chores) with the additional requirement that each agent has to receive a bundle of chores that is  connected in the graph representing  the relationship between items. 

We have demonstrated that the chore division problems and their corresponding  ``dual'' fair division  problems do not necessarily have solutions that directly translate from one context to another, moreover, the computational complexity of the corresponding problems can differ. 

We proposed polynomial algorithms for some existence problems and showed that other problems are \NP-complete. Moreover, our construction for paths even leads to multiplicative inapproximability results.

Notice that \cite{Suksompong17} considered   fair allocations on paths with contiguous bundles that are approximately fair up to an additive factor. It is easy to see that his constructions could be, with very minor modifications,  applied also in the chore-division problem. However, we do not know how to construct additively approximately fair valid allocations for the other simple graph, the star. 

Other natural  open questions can be thought of.
Is there any graph structure that could separate the polynomial problems from the intractable ones for various fairness criteria? 
When no valid fair allocation exists, one can think of some relaxations of the
connectivity constraints. One could  for example ask that  the share of each agent should consist of not more than $k$  disconnected pieces or that each diameter should be bounded.

Further,  we have omitted the recently introduced fairness criteria maximin share guarantee and envy-freeness up to one good. We believe that they may lead to some more interesting results.

\section*{Acknowledgements}
This work has been supported by the bilateral Slovak-French project APVV SK-FR-2017-0022.

\bibliography{cakebib}

\end{document}